\documentclass{article}

\usepackage[accepted]{icml2026}

\usepackage{microtype}
\usepackage{graphicx}
\usepackage{subcaption}
\usepackage{booktabs} 
\usepackage{hyperref}

\usepackage{amsmath}
\usepackage{amssymb}
\usepackage{mathtools}
\usepackage{amsthm}

\usepackage[capitalize,noabbrev]{cleveref}

\usepackage{dsfont}
\usepackage[mathscr]{euscript}
\usepackage{xcolor}
\usepackage{colortbl}
\usepackage{thmtools}
\usepackage{thm-restate}

\usepackage{multirow}
\usepackage{wrapfig}

\usepackage{pifont}

\let\vec\undefined
\usepackage{MnSymbol}
\DeclareMathAlphabet\mathbb{U}{msb}{m}{n}
\usepackage{xpatch}

\def\Rset{\mathbb{R}}

\DeclareMathOperator*{\E}{\mathbb E}
\DeclareMathOperator*{\argmax}{argmax}

\DeclarePairedDelimiter{\abs}{\lvert}{\rvert} 
\DeclarePairedDelimiter{\bracket}{[}{]}
\DeclarePairedDelimiter{\curl}{\{}{\}}
\DeclarePairedDelimiter{\paren}{(}{)}
\DeclarePairedDelimiter{\norm}{\|}{\|}

\newcommand{\sD}{{\mathscr D}}
\newcommand{\sE}{{\mathscr E}}
\newcommand{\sF}{{\mathscr F}}

\newcommand{\sH}{{\mathscr H}}

\newcommand{\sM}{{\mathscr M}}
\newcommand{\sP}{{\mathscr P}}

\newcommand{\sX}{{\mathscr X}}
\newcommand{\sY}{{\mathscr Y}}

\newcommand{\sfp}{{\mathsf p}}

\newcommand{\sfF}{{\mathsf F}}
\newcommand{\sfL}{{\mathsf L}}

\newcommand{\sfX}{{\mathsf X}}

\newcommand{\Rad}{\mathfrak R}

\newcommand{\brho}{{\boldsymbol \rho}}
\newcommand{\bepsilon}{{\boldsymbol \e}}

\newcommand{\ff}{{\sf f}}

\newcommand{\expert}{{g}}
\newcommand{\expertexpert}{{\sf g}}
\newcommand{\ldef}{{\sfL_{\rm{def}}}}

\newcommand{\tdef}{{\sfL_{\rm{def}}}}

\newcommand{\num}{{p}}

\newcommand{\TDEF}{\textsc{tdef}}
\newcommand{\MILD}{\textsc{mild}}

\newcommand{\h}{\widehat}
\newcommand{\ov}{\overline}

\newcommand{\wt}{\widetilde}
\newcommand{\e}{\epsilon}
\newcommand{\NA}{---}
\newcommand{\ignore}[1]{}

\theoremstyle{plain}
\newtheorem{theorem}{Theorem}[section]

\newtheorem{corollary}[theorem]{Corollary}
\theoremstyle{definition}

\theoremstyle{remark}

\usepackage[disable,textsize=tiny]{todonotes}

\hypersetup{
  breaklinks   = true, 
  colorlinks   = true, 
  urlcolor     = blue, 
  linkcolor    = blue, 
  citecolor   = blue 
}

\usepackage[toc, page, header]{appendix}
\icmltitlerunning{Optimized Deferral for Imbalanced Settings}

\begin{document}

\twocolumn[
  \icmltitle{Optimized Deferral for Imbalanced Settings}

\begin{icmlauthorlist}
\icmlauthor{Corinna Cortes}{google}
\icmlauthor{Anqi Mao}{courant}
\icmlauthor{Mehryar Mohri}{google,courant}
\icmlauthor{Yutao Zhong}{google}
\end{icmlauthorlist}

\icmlaffiliation{google}{Google Research, New York, NY;}
\icmlaffiliation{courant}{Courant Institute of Mathematical Sciences, New York, NY}

\icmlcorrespondingauthor{Corinna Cortes}{corinna@google.com}
\icmlcorrespondingauthor{Anqi Mao}{aqmao@cims.nyu.edu}
\icmlcorrespondingauthor{Mehryar Mohri}{mohri@google.com}
\icmlcorrespondingauthor{Yutao Zhong}{yutaozhong@google.com}

\icmlkeywords{learning to defer, cost-sensitive learning, margin bound, surrogate loss, consistency, learning theory}

\vskip 0.3in
]

\printAffiliationsAndNotice{}

\begin{abstract}

Learning algorithms can be significantly improved by routing complex
or uncertain inputs to specialized experts, balancing accuracy with
computational cost.  This approach, known as \emph{learning to defer},
is essential in domains like natural language generation, medical
diagnosis, and computer vision, where an effective deferral can reduce
errors at low extra resource consumption.  However, the two-stage learning to
defer setting, which leverages existing predictors such as a
collection of LLMs or other classifiers, often faces challenges due to an expert imbalance
problem.\ignore{, where some experts apply to only pockets of the input space.}
This imbalance can lead to suboptimal performance, with deferral
algorithms favoring the majority expert.
We present a comprehensive study of two-stage learning to defer in
expert imbalance settings. We cast the deferral loss optimization as a novel
cost-sensitive learning problem over the input-expert domain. \ignore{This motivates
our analysis of cost-sensitive multi-class classification with
imbalanced data.} We derive new margin-based loss functions and
guarantees tailored to this setting, and develop novel algorithms for
cost-sensitive learning. Leveraging these results, we design
principled deferral algorithms, \MILD\ (\emph{Margin-based Imbalanced
Learning to Defer}), specifically suited for expert imbalance
settings. Extensive experiments demonstrate the effectiveness of our approach,
showing clear improvements over existing baselines on both image classification and real-world Large Language Model (LLM) routing tasks.

\end{abstract}

\section{Introduction}
\label{sec:intro}

Effective learning algorithms often benefit from routing complex or
ambiguous inputs to specialized experts.  These experts may be
particularly effective in handling intricate domains with overlapping
class boundaries or in resolving uncertainty near decision thresholds.
They can range from human specialists with deep domain expertise to
sophisticated yet computationally intensive machine learning models.
The challenge lies in dynamically assigning each input to the most
suitable expert while balancing the trade-off between accuracy and
computational cost. This becomes even more involved when working with
a diverse set of experts, each with distinct capabilities and
limitations.

The fundamental challenge is to learn from labeled 
examples to route input instances to the most appropriate experts, a
problem known as that of \emph{learning to defer with multiple
experts}.  In natural language generation, tackling this issue is
essential for reducing errors and hallucinations and improving the
efficiency of large language models (LLMs) \citep{WeiEtAl2022,
  bubeck2023sparks}. Beyond NLP, expert selection strategies are
equally critical in domains such as medical diagnosis, image
annotation, economic forecasting, and computer vision, where selecting
the right expert can significantly enhance both accuracy and resource
efficiency.

The \emph{single-stage learning to defer} paradigm has been
extensively studied, beginning with foundational research on learning
with abstention by \citet{CortesDeSalvoMohri2016,
  CortesDeSalvoMohri2016bis, CortesDeSalvoMohri2024}, and followed
by extensive work on abstention and deferral \citep{madras2018learning,
  raghu2019algorithmic, mozannar2020consistent, wilder2021learning,
  pradier2021preferential, keswani2021towards, raman2021improving,
  liu2022incorporating, verma2022calibrated, charusaie2022sample,
  caogeneralizing, verma2023learning, MaoMohriZhong2024deferral,
  MaoMohriZhong2024predictor,MaoMohriZhong2024score,maorealizable,
  pmlr-v206-mozannar23a}. In this single-stage approach, a predictor
and a deferral function are learned jointly, with the deferral
function determining the best expert for each input.

However, in many practical scenarios, strong predictors, such as a
family of LLMs, are already available, and retraining them alongside a
deferral function can be computationally prohibitive. In addition, the models, whether LLMs or other classifiers, may be trained on privacy-sensitive data and only the final models are available for general use.  Thus, the
single-stage learning to defer framework and its associated methods
often overlook the practical constraints encountered in real-world
applications.
To address these limitations, \citet{mao2023two} introduced and
studied the \emph{two-stage learning to defer} framework, where the
family of predictors is fixed and only the deferral function is
learned. They provided non-asymptotic learning guarantees and
effective algorithms, demonstrating strong empirical
performance. \ignore{ Specifically, they developed a novel family of
  surrogate loss functions and algorithms with broad potential
  application, especially in LLMs and other practical settings.  They
  proved that these surrogate losses satisfy $\sH$-consistency bounds
  \citep{awasthi2022h,awasthi2022multi}, which are non-asymptotic,
  hypothesis-set-specific upper bounds on the target estimation loss
  expressed in terms of the surrogate estimation loss.  These bounds
  provide stronger and more informative guarantees than
  Bayes-consistency
  \citep{Zhang2003,bartlett2006convexity,steinwart2007compare,
    mozannar2020consistent}, which only guarantees that minimizing the
  surrogate loss over all measurable functions asymptotically
  minimizes the target loss.}

Nevertheless, learning to defer often faces a significant additional
challenge: in many real-world settings, a small subset of experts is
disproportionately favored across most instances, resulting in
\emph{expert imbalance}.  This issue arises in various contexts,
including LLM-based deferral
\citep{MohriAndorChoiCollinsMaoZhong2023learning} and top-$k$
prediction tasks \citep{cortes2024cardinality}. As a consequence,
deferral algorithms may overlook less frequently applicable, highly
specialized, though possibly costly, experts, and as a result perform only marginally better
than na\"ive baselines that default to the majority expert, see
Appendix~\ref{app:imbalance} for a further discussion and shortcoming
of current approaches. Thus, imbalance may hinder the learning process
and can lead to a degraded overall performance with only small resource gains. Can we design new imbalance-aware deferral
algorithms that outperform the best existing methods, yet keep resource needs low?

\ignore{
Nevertheless, learning to defer often faces a significant additional 
challenge: imbalanced training data. In many real-world scenarios, a
small subset of experts is favored for the majority of instances,
leading to biased learning. This issue arises in various contexts,
including LLM-based deferral
\citep{MohriAndorChoiCollinsMaoZhong2023learning} and top-$k$
prediction tasks \citep{cortes2024cardinality}. As a consequence,
deferral algorithms may 
overlook less frequently applicable, highly specialized experts, and as a result perform only marginally
better than na\"ive baselines that default to the majority expert, see Appendix~\ref{app:imbalance} for a further discussion  and shortcoming of current approaches. Thus,
imbalance may hinder the learning process and can lead to a degraded overall
performance as well as an increase in the cost or latency. Can we design new imbalance-aware deferral algorithms
that outperform the best existing methods?
}

In the multi-class setting, a common strategy for addressing \ignore{data }imbalance involves oversampling
underrepresented classes or undersampling dominant ones
\citep{chawla2002smote,WallaceSmallBrodleyTrikalinos2011,KubatMatwin1997,QiaoLiu2008,han2005borderline,
  estabrooks2004multiple,
  liu2008exploratory,zhang2021learning}. Another related approach
assigns different loss penalties to different classes, in the hope of making the learning algorithm select the specialized experts more often (see
Appendix~\ref{app:related-work}). However, these methods lack strong
theoretical justification, as they modify the training distribution to diverge from the true target distribution. Empirically,
their effectiveness is inconsistent and often depends on extensive
hyperparameter tuning \citep{VanHulse:2007}. In the deferral setting,
such techniques are even more problematic since they would require
assigning additional costs to experts, while the deferral problem
already incorporates instance-specific expert costs.

\begin{figure}[b]
\vskip -0.2in
  \centering
  \includegraphics[scale=0.4]{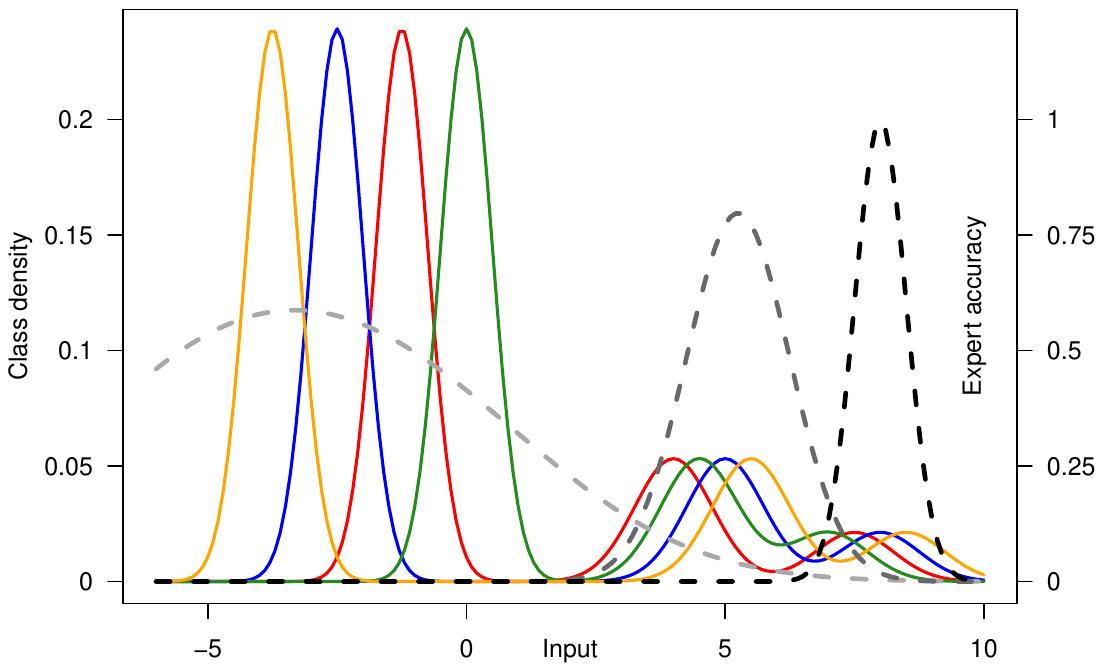}
  \caption{1-D example with 4 classes (colored densities) and 3
    experts (gray lines indicating accuracies). Experts have
    increasing costs indicated by darker shades of gray.}
  \label{fig:4cl3ex}
  \vskip -0.1in
\end{figure}


A further challenge unique to the deferral problem is that the
learning distribution is defined over input-label pairs, whereas the
imbalance we aim to correct concerns \ignore{the distribution of }experts, not class
labels. A 1-D example is provided in Figure~\ref{fig:4cl3ex}
illustrating in colors the densities of 4 classes and accuracies of 3
experts, depicted in shades of gray. The cost of the experts varies,
with the darkest gray representing the most accurate but also the most
expensive expert. In this setting, the left-most expert achieves the
highest accuracy over most of the input distribution. The task of the
two-stage deferral learning problem is to determine which regions of
the input should be deferred to which experts to achieve an optimal
trade-off between cost and accuracy. As illustrated in the figure, this choice may be independent of class labels. This distinction between labels and experts complicates the direct
application of traditional imbalance-handling techniques. Can we
design a principled algorithm for deferral that effectively accounts
for expert imbalance while preserving theoretical soundness and
guarantees?

\textbf{Our contributions.}
We present a detailed study of two-stage learning to defer in
imbalanced settings. \ignore{Appendix~\ref{app:imbalance} provides a detailed discussion of the notion of imbalance in this setting, as well as the potential failure of standard deferral algorithms. To address this issue,} We frame deferral loss minimization as a
cost-sensitive learning problem over the input-expert domain,
introducing a new distribution over input-expert pairs
(Section~\ref{sec:deferral-loss}). This leads us to study
cost-sensitive multi-class classification under imbalance. In
Section~\ref{sec:imbalanced-csl}, we build on recent margin-based
methods to introduce new cost-sensitive margin losses and establish
guarantees tailored for imbalance. We propose novel algorithms
grounded in this theory, which also improve solutions in balanced
structured prediction tasks. In Section~\ref{sec:algorithm},
we design deferral algorithms leveraging class-independent cost
structures, prove strong hypothesis-dependent consistency guarantees,
and unify input-expert optimization with standard input-label
frameworks. Finally, in Section~\ref{sec:experiments}\ignore{ and Appendix~\ref{app:experiments}}, we evaluate \MILD\ against the baseline \citep{mao2023two} across CIFAR-10, CIFAR-100, SVHN, and Tiny ImageNet. We further validate our framework on a real-world LLM routing task (MMLU), showing that \MILD\ effectively reduces computational costs by deferring to lightweight models (0.5B/1.5B) when appropriate, thereby overcoming the tendency of baselines to overuse expensive generalist models.

\ignore{
We present a detailed study of two-stage
learning to defer in imbalanced settings.
We begin by framing the deferral loss minimization as a cost-sensitive
learning problem over the input-expert domain, introducing a new
distribution over input-expert pairs
(Section~\ref{sec:deferral-loss}).
This motivates the study of cost-sensitive multi-class classification
in the context of imbalanced data.  In
Section~\ref{sec:imbalanced-csl}, we present a theoretical analysis of
imbalanced cost-sensitive multi-class classification, building on
recent margin-based approaches to address class imbalance
\citep{cortes2025balancing}.  We introduce new cost-sensitive margin
loss functions and establish margin-based guarantees specifically
tailored for the imbalanced setting. We then devise algorithms for
cost-sensitive learning based on this theory. These algorithms are
novel and, even in the balanced setting can lead to improved solutions
for structured prediction tasks and other cost-sensitive problems.
Next, leveraging these results, in
Section~\ref{sec:algorithm}, we design deferral algorithms
for imbalanced data. We first develop algorithms for class-independent
cost-sensitive learning, as the deferral costs admit this specific
property. We then prove that the surrogate loss function associated
with these algorithms admits strong hypothesis set-dependent
consistency ($\sF$-consistency bounds), which further validates our
solution. Finally, we apply these results to design principled
algorithms for deferral in imbalanced settings,
\MILD\ (\emph{Margin-based Imbalanced Learning to Defer}), by showing
how input-expert cost-sensitive optimization problem can be
reformulated back within an input-label domain framework.  In
Section~\ref{sec:experiments}, we compare \MILD\ to the
\citep{mao2023two} baseline, and report extensive experimental results
on CIFAR-10, CIFAR-100, SVHN, and Tiny ImageNet datasets.
}

\section{Preliminaries}
\label{sec:pre}

We consider a standard multi-class classification setting, with an
input space $\sX$ and a label space $\sY = [c] \coloneqq \curl*{1,
\ldots, c}$, where $c$ is the number of classes. We consider the general stochastic setting where a joint distribution $\sD$ over $\sX \times \sY$ allows each $x$ to have multiple possible labels. The expectation $\E_{y \mid x}$ is taken over the conditional distribution $\sD_{y \mid x}$, which introduces randomness into the problem.

We study the \emph{two-stage
learning to defer} framework with multiple experts.
In this setting, we are given a set of $\num \geq 2$ experts,
$\expert_1, \dots, \expert_\num$. In our setting, all experts are fixed pre-trained models, and only the router $f$ is learned. Each expert is represented as a function mapping
$\sX \times \sY$ to $\Rset$.  The learner's goal is to select the most
suitable expert $\expert_k$ for each input, balancing expert accuracy
and inference cost.

Formally, let $\sF$ be a hypothesis set of functions mapping $\sX
\times [\num]$ to $\Rset$, and let $\sF_{\rm{all}}$ be the
set of all measurable functions with the same domain and range. The
goal is to find a predictor $f \in \sF$ that minimizes the
\emph{deferral loss function}, $\tdef$, defined for any $f \in
\sF_{\rm{all}}$ and $(x, y) \in \sX \times \sY$ by:
\begin{equation*}
\tdef(f, x, y)
= \sum_{k = 1}^{\num} c_k(x, y) 1_{\ff(x) = k},
\end{equation*}
where $\ff(x) = \argmax_{k \in [\num]} f(x, k)$ represents the 
prediction of an expert by $f$ for input $x$, using the highest index
for tie-breaking. The cost $c_k(x, y)$ is incurred when expert
$\expert_k$ is selected.

A common choice for $c_k$ incorporates $\expert_k$'s classification
error and inference cost (typically computational) \citep{mao2023two};
for example, $c_k(x, y) = 1_{\expertexpert_k(x) \neq y} + \beta_k$,
where $\expertexpert_k(x) = \argmax_{y \in [c]} \expert_k(x, y)$
represents the prediction made by expert $\expert_k$ for input $x$ and
$\beta_k$ accounts for the inference cost of expert $\expert_k$.

The deferral generalization error of a hypothesis $f$, denoted by
$\sE_{\tdef}(f)$, is the expected deferral loss of $f$ over the data
distribution $\sD$: $\sE_{\tdef}(f) = \E_{(x, y) \sim \sD}
\bracket*{\tdef(f, x, y)}$.  The best-in-class generalization error of
$\sF$, denoted by $\sE_{\tdef}^*(\sF)$, is the infimum of the
generalization errors over all hypotheses in $\sF$:
$\sE^*_{\tdef}(\sF) = \inf_{f \in \sF} \sE_{\tdef}(f)$.  We will adopt
similar definitions for other loss functions.

\section{Deferral Loss Function}
\label{sec:deferral-loss}

In this section, we formulate the minimization of the deferral loss as
a cost-sensitive learning problem over the input-expert domain. To
achieve this, we first derive alternative expressions for
$\ldef$. Recall that the \emph{margin of $f \in \sF$} for the labeled
pair $(x, k) \in \sX \times [\num]$ is defined as $\rho_f(x, k) = f(x,
k) - \max_{k' \neq k} f(x, k')$, which represents the difference of
the score assigned by $f$ to the pair $(x, k)$ and that of the
runner-up expert. The following lemma provides a margin-based reformulation
of the deferral loss.

\begin{restatable}{lemma}{DeferralLossOne}
\label{lemma:deferral-loss-1}
For any $f \in \sF$ and $(x, y) \in \sX \times \sY$, the loss function
$\sfL_{\rm def}$ can be expressed as follows: 
\ifdim\columnwidth=\textwidth
{
\begin{equation}
\sfL_{\rm def}(f, x, y)
=   \sum_{k = 1}^{p} \paren*{\sum_{k' = 1}^p c_{k'}(x, y)1_{k' \neq k}}
  1_{\rho_f(x, k) \leq 0}
  - (p - 2) \sum_{k = 1}^p c_k(x, y).
\end{equation}
}\else
{
\begin{multline*}
\sfL_{\rm def}(f, x, y)\\ 
= \sum_{k = 1}^{p} \paren*{\sum_{k' = 1}^p c_{k'}(x, y)1_{k' \neq k}}
  1_{\rho_f(x, k) \leq 0}
  - (p - 2) \sum_{k = 1}^p c_k(x, y).    
\end{multline*}
}\fi
\end{restatable}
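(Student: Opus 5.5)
The plan is to reduce the identity to a single pointwise fact about margins and then do a short algebraic computation. Fix $(x, y)$ and write $C = \sum_{k=1}^{\num} c_k(x, y)$. The key observation is that exactly one expert is selected, namely $\ff(x)$, so that $1_{\rho_f(x, k) \leq 0} = 1 - 1_{\ff(x) = k}$ for every $k \in [\num]$. Indeed, $\rho_f(x, k) > 0$ means $f(x, k) > \max_{k' \neq k} f(x, k')$, i.e., $k$ is the strict unique maximizer of $f(x, \cdot)$, and then $\ff(x) = k$. Conversely, if $k$ is not a maximizer then $\rho_f(x, k) < 0$ and $\ff(x) \neq k$.

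The remaining case is a tie, where $\rho_f(x,k) = 0$ for every maximizer $k$. I expect this to be the only delicate step. Here I would use the paper's convention that ties are broken by the highest index: the margin indicator is read consistently with that tie-breaking, so that $\rho_f(x,k) \le 0$ is taken to mean ``$k$ is not the selected expert''. Under that reading the identity $1_{\rho_f(x, k) \leq 0} = 1 - 1_{\ff(x)=k}$ holds for all $k$, and in particular $\sum_{k} 1_{\rho_f(x,k) \le 0} = \num - 1$. If a fully strict reading is preferred, the alternative is to state the identity almost surely, or for $f$ without ties, which suffices for all later uses through the generalization error.

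With this identity in hand, I would substitute it into the right-hand side. The coefficient of $1_{\rho_f(x,k)\le 0}$ is $\sum_{k'} c_{k'}(x,y) 1_{k' \neq k} = C - c_k(x,y)$. Hence
\[
\sum_{k=1}^{\num} \paren*{C - c_k(x,y)}\paren*{1 - 1_{\ff(x)=k}}
= (\num - 1) C - \paren*{C - c_{\ff(x)}(x,y)}
= (\num - 2) C + c_{\ff(x)}(x,y).
\]
Here the first term uses $\sum_k (C - c_k) = \num C - C$, and the second uses that exactly one indicator $1_{\ff(x)=k}$ is nonzero.

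Subtracting $(\num - 2)C$ then leaves $c_{\ff(x)}(x, y) = \sum_{k} c_k(x, y) 1_{\ff(x) = k} = \tdef(f, x, y)$, which is the claim.
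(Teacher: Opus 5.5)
Your proof is correct and is essentially the paper's argument run in the opposite direction. The paper writes $1_{\ff(x)=k} = \sum_{k'} 1_{\ff(x)\neq k'}1_{k'\neq k} - (\num-2)$, exchanges the two sums, and then replaces $1_{\ff(x)\neq k}$ by $1_{\rho_f(x,k)\le 0}$, which is exactly the pointwise identity you substitute into the right-hand side.

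Your caution about ties is justified. The paper makes that final replacement without comment, and it genuinely fails when $f(x,\cdot)$ has several maximizers. For example, with $\num=2$ and $f(x,1)=f(x,2)$, the right-hand side equals $c_1(x,y)+c_2(x,y)$ while $\tdef(f,x,y)=c_2(x,y)$. The cleanest fix is therefore your second option, restricting to tie-free $f$, rather than reinterpreting the margin indicator.
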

Assuming that the costs satisfy $c_k \in [0, 1]$, which can be
achieved through appropriate normalization, then the deferral loss
function takes the following general form.

\begin{restatable}{lemma}{DeferralLossTwo}
\label{lemma:deferral-loss-2}
For any $f \in \sF$ and $(x, y) \in \sX \times \sY$, the loss function
$\sfL_{\rm def}$ can be expressed as follows: 
\ifdim\columnwidth=\textwidth
{
\begin{equation}
 \sfL_{\rm def}(f, x, y)
= \sum_{k = 1}^p \paren*{1 - c_k(x, y)} 1_{\rho_f(x, k) \leq 0}
  + \sum_{k = 1}^p c_k(x, y) - (p - 1).
\end{equation}
}\else
{
\begin{multline*}
\sfL_{\rm def}(f, x, y)\\
= \sum_{k = 1}^p \paren*{1 - c_k(x, y)} 1_{\rho_f(x, k) \leq 0}
  + \sum_{k = 1}^p c_k(x, y) - (p - 1).
\end{multline*}
}\fi
\end{restatable}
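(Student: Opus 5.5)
Lemma~\ref{lemma:deferral-loss-2} is a direct algebraic rewriting of Lemma~\ref{lemma:deferral-loss-1}, so the plan is to start from the expression there and simplify the coefficient of each indicator. For fixed $(x, y)$, write $C = \sum_{k'=1}^p c_{k'}(x, y)$. Then the coefficient of $1_{\rho_f(x,k) \leq 0}$ in Lemma~\ref{lemma:deferral-loss-1} is $\sum_{k' \neq k} c_{k'}(x, y) = C - c_k(x, y)$, and the loss becomes
\begin{equation*}
\sfL_{\rm def}(f, x, y) = \sum_{k=1}^p \paren*{C - c_k(x, y)} 1_{\rho_f(x,k) \leq 0} - (p-2) C .
\end{equation*}

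The key observation is combinatorial. The margin satisfies $\rho_f(x,k) > 0$ only for the unique strict maximizer, and there is at most one such $k$. Let $\ff(x)$ be the selected expert. I would check that $\sum_{k} 1_{\rho_f(x,k) \leq 0} = p - 1$. The one subtle point is ties. If the maximum is attained several times, then every $k$ has $\rho_f(x,k) \leq 0$, so the sum equals $p$ rather than $p-1$, and the identity $1_{\rho_f(x,\ff(x)) > 0} = 1_{\ff(x)=k}$ fails.

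I expect this tie case to be the main obstacle. I would handle it exactly as Lemma~\ref{lemma:deferral-loss-1} must, reading the margin condition for the selected index consistently with the highest-index tie-breaking, so that $1_{\rho_f(x,k) \leq 0} = 1 - 1_{\ff(x) = k}$. Alternatively, one can note that ties form a measure-zero/degenerate case that is treated by convention. With this convention, $\sum_k 1_{\rho_f(x,k)\le 0} = p-1$, and therefore
\begin{equation*}
\sum_{k} C\, 1_{\rho_f(x,k) \leq 0} = (p-1) C .
\end{equation*}

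Substituting gives
\begin{equation*}
\sfL_{\rm def}(f, x, y) = (p-1)C - (p-2)C - \sum_k c_k(x, y)\, 1_{\rho_f(x,k)\leq 0} = C - \sum_k c_k(x, y)\, 1_{\rho_f(x,k)\leq 0} .
\end{equation*}
Adding and subtracting $\sum_k 1_{\rho_f(x,k)\le 0} = p-1$ rewrites this as
\begin{equation*}
\sum_k \paren*{1 - c_k(x, y)} 1_{\rho_f(x,k)\leq 0} + C - (p-1),
\end{equation*}
which is the claimed form.

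As a sanity check, the result agrees with the direct definition. Since $1_{\rho_f(x,k)\le 0} = 1 - 1_{\ff(x)=k}$, the expression $C - \sum_k c_k(x, y)\,(1 - 1_{\ff(x)=k})$ equals $\sum_k c_k(x, y)\, 1_{\ff(x)=k}$. The normalization $c_k \in [0,1]$ is not needed for the identity itself; it only makes the weights $1 - c_k$ nonnegative, which is the point of this form.
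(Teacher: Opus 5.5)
Your proof is correct and is essentially the paper's argument. The paper works directly from the definition, writing $1_{\ff(x) = k} = 1 - 1_{\ff(x) \neq k}$ and using $\sum_k 1_{\ff(x)\neq k} = p-1$, which is exactly your closing sanity check. You instead route the same identity through Lemma~\ref{lemma:deferral-loss-1}; this is valid but slightly roundabout, since that lemma rests on the same substitution.

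Your caveat about ties is correct and applies equally to the paper, whose last step silently replaces $1_{\ff(x)\neq k}$ by $1_{\rho_f(x,k)\le 0}$. At a tie every margin is $\le 0$, so the right-hand side equals $1$ rather than $c_{\ff(x)}(x,y)$. The convention you adopt is therefore genuinely required. The ``measure-zero'' alternative is not justified for an arbitrary fixed $f$, for example one that is constant in $k$.
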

The proofs for both lemmas are presented in
Appendix~\ref{app:deferral-loss}. Based on these results, and ignoring
constants that do not depend on $f$, the loss $\sfL_{\rm def}$ can be
equivalently written as:
\[
\forall (f, x, y),\quad  \sfL_{\rm def}(f, x, y)
= \sum_{k = 1}^p \ov c_k(x, y) 1_{\rho_f(x, k) \leq 0},
\]
for appropriate \emph{rewards} $\ov c_k$ (with $k \in [\num]$), which vary over the support of the joint distribution $\sD$. In particular, Lemma~\ref{lemma:deferral-loss-1} leads to an equivalent form of $\ldef$ by setting reward $\ov c_k(x, y) = \sum_{k' \neq k} c_{k'}(x, y)$ and Lemma~\ref{lemma:deferral-loss-2} yields an alternative equivalent form with reward $\ov c_k(x, y) = 1 - c_k(x, y)$.

Fix $x \in \sX$
and define
$\sfp(k | x) = \frac{\E_{y | x}[\ov c_k(x, y)]}{C(x)}$, which normalizes the reward into a conditional probability, where $ C(x) = \E_{y |
  x}\bracket*{\sum_{k = 1}^p \ov c_k(x, y)}$ is the normalization factor.
Then, we have
\begin{align}
  \E_{y | x} \bracket*{\sfL_{\rm def}(f, x, y)}
  & = \sum_{k = 1}^p \E_{y | x}[\ov c_k(x, y)] 1_{\rho_f(x, k) \leq 0} \nonumber \\
  & = C(x) \sum_{k = 1}^p \sfp(k | x) 1_{\rho_f(x, k) \leq 0}.
\end{align}
Note that the reward $\overline{c}_k$, and consequently the probability $\sfp(k \mid x)$, is inversely proportional to the cost $c_k$. This aligns with the intuition that samples should be deferred to experts with lower cost.
Let $\ov \sD$ denote the marginal distribution over $\sX$ and define
the distribution $\sP$ over $\sX \times [\num]$ by
$\forall (x, k) \in \sX \times [\num],
 \sP(x, k) = \sfp(k | x) \ov \sD(x)$.
Then, we can express the expected deferral loss as:
\begin{align}
\label{eq:new}
 \E_{(x, y) \sim \sD} \bracket*{\sfL_{\rm def}(f, x, y)} &= \E_{(x, k) \sim \sP} \bracket*{\ell_{\rm def}(f, x, k)} \nonumber \\
 \quad \text{ with}\quad \ell_{\rm def}(f, x, k) &= C(x) 1_{\rho_f(x, k) \leq 0}
\end{align}
where the loss function $\ell_{\rm def}$ is defined for all $(f, x, k)
\in \sF_{\rm{all}} \times \sX \times [\num]$.
Thus, $\ell_{\rm def}$ represents a cost-sensitive loss function over
the input-expert domain, suitable for addressing our expert imbalance problem. This motivates the study of
cost-sensitive multi-class classification with imbalanced data in the
following section.

\section{Imbalanced Cost-Sensitive Multi-Class Classification}
\label{sec:imbalanced-csl}

This section gives a theoretical analysis of imbalanced cost-sensitive
multi-class classification, leveraging recent work of
\citet{cortes2025balancing} on imbalanced margin for multi-class
classification. We first extend existing theoretical tools to the
cost-sensitive instance-dependent case. Then, we present a theoretical analysis of this
framework and derive new algorithms for imbalanced cost-sensitive
multi-class classification.  Our algorithms are novel, even in the
balanced case and can lead to improved structured prediction theory and
algorithm design (see Appendix~\ref{app:csl}). \ignore{Note that while our work builds on \citet{cortes2025balancing}, Appendix~\ref{app:novelty} clarifies the key novel aspects, particularly in comparison to that prior work.}

\subsection{Theoretical Analysis}
\label{sec:theory}

Let $f\colon \sX \times [\num] \to \Rset$ be a scoring function
belonging to the hypothesis set $\sF$. 
We define the cost-sensitive zero-one loss function $\sfL$ as follows:
for all $(f, x, k) \in \sF \times \sX \times [\num]$,
\[
\sfL(f, x, k) = c(x, k, \ff(x)) \, 1_{\rho_f(x, k) \leq 0},
\]
where $c(x, k, \ff(x)) \in [0, 1]$ is a non-negative cost bounded by
one that is vanishing when $k = \ff(x)$. Note that $\ell_{\rm def}$ is
a special case of $\sfL$.

\textbf{A. Cost-sensitive margin loss functions.}  We first introduce
new cost-sensitive instance-dependent margin loss functions. \ignore{ which will play a central
role in our derivation of margin-based guarantees for cost-sensitive
learning, in particular in the imbalanced setting.}

Let $\Phi_{\rho} \colon u \mapsto 
\min \paren*{1, \max \paren*{0, 1
    - u / \rho}}$ denote the $\rho$-margin loss function.
We can upper-bound the cost-sensitive zero-one loss function $\sfL$
as follows:
\begin{align*}
  \sfL(f, x, k)
  & \leq c(x, k, \ff(x)) \Phi_{\rho}\paren*{\rho_f(x, k)}\\
  & = c(x, k, \ff(x)) \Phi_{\rho}\paren[\Big]{f(x, k) - \max_{k' \neq k} f(x, k')}\\
  & = c(x, k, \ff(x)) \Phi_{\rho}\paren*{f(x, k) - f(x, \ff(x))}\\
  & \leq \max_{k' \in [\num]} \curl*{c(x, k, k') \Phi_{\rho}\paren*{f(x, k) - f(x, k')}}.
\end{align*}
The second equality follows from the fact that when $k = \ff(x)$ we
have $c(x, k, \ff(x)) = 0$. Otherwise, for $k \neq \ff(x)$, the
runner-up prediction satisfies $\argmax_{k' \neq k} f(x, k') =
\ff(x)$.

This analysis motivates the definition of the
\emph{cost-sensitive margin loss function} as the function
$\sfL_{\rho} \colon \sF_{\mathrm{all}} \times \sX \times [\num] \to
\Rset$, defined as follows, for any fixed $\rho > 0$:
\begin{equation*}
\sfL_{\rho}(f, x, k)
=  \max_{k' \in [\num]} \curl*{c(x, k, k') \Phi_{\rho}\paren*{f(x, k) - f(x, k')}}.
\end{equation*}
Inspired by the analysis of \citet{cortes2025balancing} for imbalanced
learning in standard multi-class classification, we extend our
definition to the imbalanced setting.  Given a vector of margin
parameters $\brho =
[\rho_k]_{k \in [\num]}$, we introduce the \emph{imbalanced
cost-sensitive margin loss function} as the function $\sfL_{\brho}
\colon \sF_{\mathrm{all}} \times \sX \times [\num] \to \Rset$, defined
by: for all $(f, x, k) \in \sF \times \sX \times [\num]$, 
\begin{equation*}
\sfL_{\brho}(f, x, k) = \max_{k' \in [\num]} \curl[\Big]{c(x, k, k') \Phi_{\rho_k} \paren*{f(x, k)
    - f(x, k')}}.
\end{equation*}

\textbf{B. Margin bounds.}
\label{sec:margin-bound}
We now establish a general margin-based generalization bound, which 
serves as the foundation for deriving new algorithms for imbalanced
cost-sensitive classification.

To capture class-specific variations in confidence margins, we adopt
the definition of \emph{class-sensitive Rademacher complexity from
prior work}, which introduces a distinct confidence margin weight
$\rho_i$ for each class $i$.
Given non-negative confidence margin parameters
$\brho = [\rho_k]_{k \in [\num]}$, the empirical
class-sensitive Rademacher complexity of $\sF$ for a sample $S =
\paren*{x_1, \ldots, x_m}$ is defined as:
\begin{equation*}
  \h \Rad_{S, \brho}(\sF)
  = \frac{1}{m} \E_{\bepsilon}\bracket*{\sup_{f \in \sF} \curl*{
      \sum_{j = 1}^p \sum_{i \in S_j} \sum_{k = 1}^p  \e_{ik}
      \frac{f(x_i, k)}{\rho_j}}},
\end{equation*}
where $S_j$ denotes the subsample of $S$ consisting of points labeled
with $j$, with cardinality $m_{j} = |S_{j}|$, and $\bepsilon =
\paren*{\e_{i k}}_{i, k}$ represents a matrix of independent
Rademacher variables $\e_{ik}$s, each uniformly distributed over
$\curl*{-1, +1}$.
For any integer $m \geq 1$, the class-sensitive Rademacher complexity
of $\sF$ is the expectation of $\h \Rad_{S, \brho}(\sF)$ over all
samples $S$ of size $m$: $\Rad_{m, \brho}(\sF) = \E_{S \sim \sD^m}
\bracket*{\h \Rad_{S, \brho}(\sF)}$.

Using these notions of complexity, we prove the following
imbalanced cost-sensitive margin bound.

\begin{restatable}[Margin bound for imbalanced cost-sensitive
    classification]{theorem}{MarginBound}
\label{thm:margin-bound}
Let $\sF$ be a family of functions mapping from $\sX \times [\num]$ to
$\Rset$, and fix $\brho = [\rho_k]_{k \in [\num]}$. Then, for any
$\delta > 0$, with probability at least $1 - \delta$, each of the
following inequalities holds for all $f \in \sF$:
\begin{align*}
\sE_{\sfL}(f) &\leq \h \sE_{S, \brho}(f) + 4 \sqrt{2p}\, \Rad_{m, \brho}(\sF)
  + \sqrt{\frac{\log \frac{1}{\delta}}{2m}}\\
  \sE_{\sfL}(f) &\leq \h \sE_{S, \brho}(f) + 4 \sqrt{2p}\, \h \Rad_{S, \brho}(\sF)
  + 3 \sqrt{\frac{\log \frac{2}{\delta}}{2m}}.
\end{align*}
\end{restatable}
Our proof (see Appendix~\ref{app:margin-bound}) is similar to that of
\citet{cortes2025balancing}, modulo our adaptation to the
instance-dependent cost-sensitive nature of our notion of margin loss. This formulation requires substantial
adaptation and extends the previous work, which is limited to the standard class-imbalanced setting
with uniform costs. In particular, we establish novel margin bounds based on a refined upper bound involving a
maximum operator and derive new Rademacher complexity bounds for this term using the vector contraction lemma.
Our bounds can be generalized to hold uniformly for all $\brho =
[\rho_k]_{k \in [\num]} \in (0, 1]^p$, at the cost of additional $\log
  \log$-terms\ignore{ $\sum_{j = 1}^p \sqrt{\frac{\log \log_2
        \frac{2}{\rho_{j}}}{m}}$}, using standard proof techniques
  \citep[Theorem~5.9]{MohriRostamizadehTalwalkar2018}.
As with standard margin bounds, these learning guarantees suggest a
trade-off: Increasing $\rho_k$ reduces the complexity term (second
term) but simultaneously increases the empirical imbalanced
cost-sensitive margin loss, $\h \sE_{S, \brho}(f)$, by
imposing stricter confidence margin requirements. Thus, if $f$
maintains a low empirical imbalanced cost-sensitive margin loss even
with relatively large $\rho_k$ values, it admits a strong
generalization error guarantee.

\subsection{Algorithms}
\label{sec:algorithms}

The margin guarantees established in the previous section provide a
foundation for developing new algorithms. We begin by deriving a more
explicit learning guarantee within a broad framework, which we then
use to define a general cost-sensitive learning algorithm for imbalanced data.

\textbf{A. Explicit upper bounds}. To make these guarantees
more explicit, we introduce the following setup.
Given a feature mapping $\Phi \colon \sX \times [\num] \to \Rset^d$,
we can identify $\sX \times [p]$ with a subset of $\Rset^d$, with
$\norm{\Phi(x, k)} \leq \sfX_k$ for all $x \in \sX$ and $\sfX = \max_{k
  \in [p]} \sfX_k$, for some norm $\norm*{ \ \cdot \ }$. We assume
$\sF$ is given by $\sF = \curl*{f \in \ov \sF \colon \norm{f}_* \leq
\ov  \sfF}$, for some appropriate norm $\norm*{\, \cdot \,}_*$ on some
space $\ov \sF$ and $\ov \sfF > 0$. This formulation covers a wide range
of hypothesis sets, including linear, kernel-based, and neural network
models. In particular, it captures the settings of neural networks
with weight matrices constrained by a Frobenius norm bound
\citep{CortesGonzalvoKuznetsovMohriYang2017,
  NeyshaburTomiokaSrebro2015} or a spectral norm complexity constraint
relative to reference weight matrices
\citep{BartlettFosterTelgarsky2017}.
In all of these cases, the empirical class-sensitive Rademacher
complexity can be upper bounded as follows:
\begin{equation}
  \h \Rad_{S, \brho}(\sF)
  \leq \frac{\sqrt{p} \, \sfF }{m} \sqrt{\sum_{j = 1}^p
    \frac{m_j \sfX_j^2}{\rho_j^2}}
  \leq \frac{\sqrt{p} \, \sfF \sX}{m} \sqrt{\sum_{j = 1}^p
    \frac{m_j}{\rho_j^2}},
\end{equation}
\vskip -0.1in
where the complexity term $\sfF$ depends on $\ov \sfF$. Combining this upper bound with Theorem~\ref{thm:margin-bound}
yields the following more explicit guarantee.

\begin{corollary}
\label{cor:margin-bound}
Fix $\brho = [\rho_k]_{k \in [\num]}$, then, for any $\delta > 0$,
with probability at least $1 - \delta$ over the choice of a sample $S$
of size $m$, the following holds for any $f \in \sF$:
\begin{align*}
  \sE_{\sfL}(f) &\leq \h \sE_{S, \brho}(f)
  + \frac{4 \sqrt{2} \, p \sfF}{m} \sqrt{\sum_{j = 1}^p \frac{m_j \sfX_j^2}{\rho_j^2}}
  + 3 \sqrt{\frac{\log \frac{2}{\delta}}{2m}}.
\end{align*}
\end{corollary}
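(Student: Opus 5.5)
The corollary follows by plugging the explicit Rademacher complexity estimate into the second, data-dependent inequality of Theorem~\ref{thm:margin-bound}. That inequality holds with probability at least $1-\delta$, simultaneously for all $f \in \sF$:
\[
\sE_{\sfL}(f) \leq \h \sE_{S, \brho}(f) + 4 \sqrt{2p}\, \h \Rad_{S, \brho}(\sF) + 3 \sqrt{\frac{\log \frac{2}{\delta}}{2m}} .
\]
It already contains the confidence term $3\sqrt{\log(2/\delta)/(2m)}$ that appears in the corollary. We therefore do not need the expected-complexity version, and no extra union bound is required.

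Next we invoke the upper bound on the empirical class-sensitive Rademacher complexity stated just before the corollary:
\[
\h \Rad_{S, \brho}(\sF) \leq \frac{\sqrt{p}\, \sfF}{m} \sqrt{\sum_{j=1}^p \frac{m_j \sfX_j^2}{\rho_j^2}} .
\]
This bound holds deterministically for every sample $S$, since it depends on $S$ only through the counts $m_j$. Multiplying by $4\sqrt{2p}$ and using $\sqrt{2p}\cdot\sqrt{p} = \sqrt{2}\,p$ gives exactly the middle term $\frac{4\sqrt{2}\,p\,\sfF}{m}\sqrt{\sum_j m_j \sfX_j^2/\rho_j^2}$. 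Substituting this into the theorem's inequality yields the claim on the same probability-$(1-\delta)$ event.

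The only point that needs real justification is the Rademacher estimate itself, in case it has to be proved rather than taken as given. The route would be the following.

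\begin{enumerate}
\item Write $f(x_i,k) = \langle w, \Phi(x_i,k)\rangle$ with $\|w\|_* \leq \sfF$.
\item Pull the supremum inside by duality, which bounds the complexity by $\frac{\sfF}{m}\,\E\bigl\|\sum_{j}\sum_{i\in S_j}\sum_k \e_{ik}\Phi(x_i,k)/\rho_j\bigr\|$.
\item Apply Jensen's inequality to pass to the square root of the second moment, which is valid for a Hilbert (or suitably smooth) norm.
\item Use independence of the $\e_{ik}$ to kill the cross terms. This leaves $\sum_j\sum_{i\in S_j}\sum_k \|\Phi(x_i,k)\|^2/\rho_j^2 \leq \sum_j m_j\, p\, \sfX_j^2/\rho_j^2$, which produces the factor $\sqrt{p}$.
\end{enumerate}

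There is one subtlety in the last step. It must use the per-class bound $\sfX_j$, indexed by the label $j$ of $x_i$, even though the norm is of $\Phi(x_i,k)$ with $k$ varying over all experts. Justifying this needs the paper's convention about which bound applies, or else a bound by $\sfX$ followed by the weaker second form of the estimate.

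Since the statement says ``in all of these cases'' this estimate holds, I would treat it as given. The corollary is then a direct substitution.
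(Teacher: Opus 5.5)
Your proposal is correct and matches the paper's argument. The paper likewise obtains the corollary by substituting $\h \Rad_{S,\brho}(\sF) \leq \frac{\sqrt{p}\,\sfF}{m}\sqrt{\sum_{j} m_j \sfX_j^2/\rho_j^2}$ into the second, empirical-complexity inequality of Theorem~\ref{thm:margin-bound}, using $4\sqrt{2p}\cdot\sqrt{p} = 4\sqrt{2}\,p$, and it takes that complexity estimate as given rather than proving it. The indexing subtlety you flag is genuine: the estimate needs a bound on $\|\Phi(x_i,k)\|$ indexed by the label $j$ of $x_i$, not by $k$ as in the stated setup.
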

\vskip -0.15in
As with Theorem~\ref{thm:margin-bound}, this bound can be generalized
to hold uniformly for all $\brho = [\rho_k]_{k \in [\num]} \in (0,
1]^p$, at the cost of additional $\log \log$-terms. This generalized
  guarantee provides a basis for designing algorithms 
  choosing $f \in \sF$ and $\brho$ to minimize the bound.

Let $\Psi$ be a decreasing convex function such that $\Phi_{\rho}(x)
\leq \Psi\left(\frac{x}{\rho}\right)$ for all $x \in \Rset$ and $\rho
> 0$. $\Psi$ may be the hinge loss, $\Psi(x) = \max(0, 1 - x)$, or any
member of the broad family of composition-sum (comp-sum) losses
\citep{mao2023cross} defined by $\Psi(x) = \Phi^{\tau}(e^{-x})$, with
$\Phi^\tau$ for $\tau \geq 0$ given by
\[
\Phi^{\tau}(u) =
\begin{cases}
  \frac{1}{1 - \tau} \paren*{(1 + u)^{1 - \tau} - 1}
  & \tau \geq 0, \tau \neq 1 \\
\log(1 + u) & \tau = 1,
\end{cases}
\]
for all $u \geq 0$. This family includes the logistic loss ($\tau = 1$) and the
exponential loss ($\tau = 0$).
Using the fact that $\Phi_{\rho}(t) \leq
\Psi\left(\frac{t}{\rho}\right)$, the guarantee of
Corollary~\ref{cor:margin-bound} and its generalization to a uniform
bound can be expressed as: for any $\delta > 0$, with probability at
least $1 - \delta$, for all $f \in \sF$,
\begin{align*}
\sE_{\sfL}(f)
  & \leq \mspace{-5mu}
    \frac{1}{m} \bracket[\bigg]{ \sum_{j = 1}^p \sum_{i \in S_j} \max_{k' \in [\num]}
    \curl*{c(x_i, j, k') \Psi\paren*{\tfrac{f(x_i, j) - f(x_i, k')}{\rho_j}}}
    \mspace{-5mu} }\\
 & \quad + \frac{4 \sqrt{2} \sfF p}{m} \sqrt{\sum_{j = 1}^p \frac{m_j \sfX_j^2}{\rho_j^2}}
  + O\paren*{ \frac{1}{\sqrt{m}}},
\end{align*}
where the last term accounts for the $\log$-$\log$ terms and the
$\delta$-confidence term.\\
\textbf{B. General cost-sensitive algorithm.} Define $\ov \rho =
\sum_{j = 1}^p \rho_{j}$ and $\ov \sfX = \sum_{j = 1}^p (m_j
\sfX_{j}^2)^{\frac{1}{3}}$.  It is straightforward to show that for
constant $\ov \rho$, the second term of the bound is minimized when
$\frac{\rho_j}{\ov \rho} = \frac{(m_j \sfX_{j}^2)^{\frac{1}{3}}}{\ov \sfX}$ for all $j$.
This leads to the following regularization-based algorithm:
\begin{align}
 & \frac{1}{m} \bracket*{ \sum_{j = 1}^p
  \sum_{i \in S_j} \max_{k' \in [\num]} \curl*{c(x_i, j, k')
    \Psi\paren*{\tfrac{f(x_i, j) - f(x_i, k')}{\rho_j}}} } \nonumber \\
     & \quad + \min_{f \in \ov \sF} \lambda \norm*{f}^2, 
\end{align}
where $\lambda$ and $\rho_j$s are selected via cross-validation, with
$\rho_j$s close to $\frac{(m_j \sfX_{j}^2)^{\frac{1}{3}}}{\ov
  \sfX} \ov \rho$.  For a large number of classes $p$, we can assign the same $\rho_j$ to classes with
smaller representation.
\ignore{limit the search by assigning the same $\rho_j$ to classes with
smaller representation.
, and distinct $\rho_j$s only to the top
classes.}

\section{Algorithms for Expert Imbalance Settings}
\label{sec:algorithm}

In this section, we develop deferral algorithms for expert imbalance settings,
leveraging the results of the previous section.  We first derive
simplified cost-sensitive algorithms for the settings where the costs
depend only on $x$. Next, we establish a strong hypothesis
set-dependent consistency guarantee for the corresponding loss
function.  This result further justifies the proposed algorithm.
Finally, we apply these results to the deferral setting, which leads
to a novel deferral algorithm for expert imbalance settings with favorable
theoretic guarantees.

\subsection{Algorithms for Class-Independent Cost-Sensitive Learning}
\label{sec:deferral-algorithms}

When $c(x, k, k')$ is independent of $(k, k')$ (denoted more
simply as $C(x)$), as is the case in the deferral setting (i.e., $\ell_{\rm def}$ in Eq.~\eqref{eq:new}), choosing $\Psi$ to be the logistic loss,
using the monotonicity of the $\log$
function and upper-bounding the maximum by a sum, we obtain: \ignore{$\sfL_\brho(f, x, k)
 \leq 
C(x) \log\bracket*{1 + \max_{k' \neq k} \exp\paren*{\frac{f(x, k') - f(x, k)}{\rho_k}}} \leq C(x)
  \log\bracket*{\sum_{k' = 1}^p \exp\paren*{\frac{f(x, k') - f(x, k)}{\rho_k}}}.
$}
\begin{align*}
\sfL_\brho(f, x, k) &
 \leq \mspace{-3mu}
C(x) \log\bracket*{1 + \max_{k' \neq k} \exp\paren*{\frac{f(x, k') - f(x,
                      k)}{\rho_k}} \mspace{-4mu}} \\
&\leq C(x)
  \log\bracket*{\sum_{k' = 1}^p \exp\paren*{\frac{f(x, k') - f(x, k)}{\rho_k}}}.
\end{align*}

See Appendix~\ref{app:derivation} for detailed derivations and formulations with
other choices of $\Phi$, such as the comp-sum loss with $\tau \neq
1$. We consider the imbalanced cost-sensitive surrogate loss $\wt
\sfL_\brho$ defined as:
\vskip -0.4cm
\begin{align}
   \wt \sfL_\brho(f, x, k) &= C(x) \log\bracket*{
      \sum_{k' = 1}^p \exp\paren*{\frac{f(x, k') - f(x, k)}{\rho_k}}} \nonumber \\
  &\coloneqq C(x) \, \wt \ell_\brho(f, x, k).
\end{align}
Then, in this special case where $c(x, k, k') = C(x)$, the algorithm
presented in the previous section can be re-expressed as follows,
with $\rho_j$s chosen via cross-validation and close to $\frac{(m_j \sfX_{j}^2)^{\frac{1}{3}}}{\ov
  \sfX} \ov \rho$:
  \vskip -0.5cm
\begin{equation}
\min_{f \in \ov \sF} \lambda \norm*{f}^2 + \frac{1}{m}
\sum_{i \in S} C(x_i){\wt \ell_\brho(f, x_i, k_i)}.
\end{equation}

\subsection{Hypothesis-Set Dependent Consistency Bounds}
\label{sec:deferral-bounds}

Given a hypothesis set $\sF$, an \emph{$\sF$-consistency bound}
\citep{awasthi2022h,awasthi2022multi,mao2023cross} for a surrogate
loss $\sfL_1$ of a target loss function $\sfL_2$ is an inequality of
the form
\begin{align}
\forall f \in \sF, \
& \sE_{\sfL_2}(f) - \sE^*_{\sfL_2}(\sF) + \sM_{\sfL_1}(\sF) \nonumber \\ 
& \quad \leq \Gamma\paren*{\sE_{\sfL_1}(f) - \sE^*_{\sfL_1}(\sF) + \sM_{\sfL_1}(\sF)},
\end{align}
where $\Gamma\colon \Rset_+ \to \Rset_+$ is a non-decreasing concave
function with $\Gamma(0) = 0$, and $\sM_{\sfL}(\sF)$ is the
\emph{minimizability gap} for hypothesis set $\sF$ and loss function
$\sfL$. The minimizability gap is defined as the difference between
the best-in-class expected loss and that of the expected pointwise
infimum loss: $\sM_{\sfL}(\sF) = \sE^*_{\sfL}(\sF) - \E_{x} \bracket*
{\inf_{f \in \sF} \E_{y|x}\bracket*{\sfL(f, x, k)}}$. Due to the
super-additivity of the infimum, the minimizability gap is always
non-negative. It becomes zero when the best-in-class error
$\sE^*_{\sfL}(\sF)$ equals the Bayes error
$\sE^*_{\sfL}(\sF_{\rm{all}})$, specifically when $\sF =
\sF_{\rm{all}}$ \citep{mao2024universal}. The $\sF$-consistency bound
relates the minimization of the estimation error for the surrogate
loss $\sfL_1$ to that of the target loss $\sfL_2$ quantitatively.  It
provides a stronger and more informative guarantee than
Bayes-consistency \citep{Zhang2003,bartlett2006convexity,
  steinwart2007compare}, which it implies (by setting $\sF =
\sF_{\rm{all}}$). Bayes-consistency is a fundamental guarantee in the study of surrogate losses, including in learning to defer settings \citep{mozannar2020consistent,verma2022calibrated}, where it ensures that minimizing the excess error of a surrogate loss also minimizes that of the target deferral loss. However, it can be uninformative in practice, as it applies to all measurable functions and ignores the constraints of restricted hypothesis classes. Recent work by \citet{mao2023two} studies $\sF$-consistency bounds for learning to defer, which are more informative because they are specific to the hypothesis class and non-asymptotic.  Note that we use the term $\sF$-consistency as our hypothesis set is denoted by $\sF$.

The following result establishes an $\sF$-consistency bound for the
imbalanced cost-sensitive surrogate loss $\wt \sfL_{\brho}$ introduced
with respect to the cost-sensitive zero-one loss.  A hypothesis set
$\sF$ is considered complete if, for every input-expert pair $(x, k)
\in \sX \times [\num]$, the set of scores $\curl*{f(x, k) \colon f \in
  \sF}$ spans all real numbers.  Most commonly used hypothesis sets
are complete.

\begin{restatable}[$\sF$-consistency bound for imbalanced cost-sensitive
    surrogate loss]{theorem}{HConsistencySurrogate}
\label{thm:H-consistency-surrogate}
Let $\sF$ be a complete hypothesis set. Then, for all $f \in \sF$,
$\brho > \mathbf{0}$, the following inequality holds:
\vskip -0.4cm
\ifdim\columnwidth=\textwidth 
{
\begin{equation*}
  \sE_{\sfL}(f) - \sE^*_{\sfL}(\sF) + \sM_{\sfL}(\sF)
  \leq \sqrt{2} \paren*{\sE_{\wt \sfL_{\brho}}(f) - \sE^*_{\wt \sfL_{\brho}}(\sF) + \sM_{\wt \sfL_{\brho}}(\sF)}^{\frac12}.
\end{equation*}
}\else
{
\begin{multline*}
\sE_{\sfL}(f) - \sE^*_{\sfL}(\sF) + \sM_{\sfL}(\sF)\\
\leq \sqrt{2} \paren*{\sE_{\wt \sfL_{\brho}}(f) - \sE^*_{\wt \sfL_{\brho}}(\sF) + \sM_{\wt \sfL_{\brho}}(\sF)}^{\frac12}.
\end{multline*}
}\fi
\end{restatable}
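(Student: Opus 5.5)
We prove the bound pointwise in $x$ and then integrate, using Jensen's inequality with the concave map $t \mapsto \sqrt{2t}$. Here $\sfL(f,x,k) = C(x)1_{\rho_f(x,k)\le 0}$, since in this section $c(x,k,k') = C(x)$, which vanishes when $k=\ff(x)$ in the sense used in the margin-loss derivation. The surrogate is $\wt\sfL_\brho(f,x,k) = C(x)\,\wt\ell_\brho(f,x,k)$. Write $\sfp(k\mid x)$ for the conditional distribution of $k$ given $x$ under $\sP$. The conditional errors are
\[
\mathcal C_{\sfL}(f,x) = C(x)\sum_k \sfp(k\mid x)\,1_{\ff(x)\ne k} = C(x)\bigl(1-\sfp(\ff(x)\mid x)\bigr),
\qquad
\mathcal C_{\wt\sfL_\brho}(f,x) = C(x)\sum_k \sfp(k\mid x)\,\wt\ell_\brho(f,x,k).
\]
By the definition of the minimizability gap, $\sE_{\sfL}(f)-\sE^*_{\sfL}(\sF)+\sM_{\sfL}(\sF) = \E_x[\Delta\mathcal C_{\sfL}(f,x)]$, where $\Delta\mathcal C(f,x) = \mathcal C(f,x)-\inf_{f\in\sF}\mathcal C(f,x)$. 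The same identity holds for the surrogate. It therefore suffices to show, for every $x$,
\[
\Delta\mathcal C_{\sfL}(f,x) \le \sqrt{2\,C(x)\,\Delta\mathcal C_{\wt\sfL_\brho}(f,x)}.
\]
Since $C(x)\le$ a bounded constant (normalizing so that $C(x)\le 1$, or absorbing the constant), Jensen then gives the stated inequality.

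\textbf{Target side.} By completeness, $\inf_f \mathcal C_{\sfL}(f,x) = C(x)\bigl(1-\max_k \sfp(k\mid x)\bigr)$. Hence
\[
\Delta\mathcal C_{\sfL}(f,x) = C(x)\bigl(\sfp_{\max}-\sfp(h\mid x)\bigr), \qquad h=\ff(x).
\]
Let $y^*$ denote the argmax of $\sfp(\cdot\mid x)$. The factor $C(x)$ appears on both sides, so the pointwise claim reduces to
\[
\sfp(y^*\mid x)-\sfp(h\mid x) \le \sqrt{2\,\Delta(x)}, \qquad \text{where } C(x)\,\Delta(x) = \Delta\mathcal C_{\wt\sfL_\brho}(f,x)
\]
(this uses $C(x)^2\le C(x)$).

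\textbf{Surrogate side.} This is the main step and the main obstacle. The difficulty is that each class $k$ has its own temperature $\rho_k$, so $\sum_k \sfp(k\mid x)\,\wt\ell_\brho$ is not a plain cross-entropy of a single softmax. I would follow the comp-sum approach of \citet{mao2023cross}. Consider the perturbed hypothesis $f_\mu$ obtained from $f$ by swapping-and-shifting the scores of $h$ and $y^*$: set $f_\mu(x,y^*) = f(x,h)+\mu$ and $f_\mu(x,h) = f(x,y^*)-\mu$, and leave the other scores unchanged. This is allowed by completeness. Then
\[
\Delta(x) \ge \sum_k \sfp(k\mid x)\bigl[\wt\ell_\brho(f,x,k)-\wt\ell_\brho(f_\mu,x,k)\bigr]
\]
for every $\mu$, and we optimize over $\mu$.

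To handle the distinct $\rho_k$, I would rescale each term. For each $k$, the function $\wt\ell_\brho(\cdot,x,k)$ is the logistic loss of the softmax with temperature $\rho_k$. Only the entries for $h$ and $y^*$ change, and the terms with $k\notin\{h,y^*\}$ do not increase under a suitable choice of sign of $\mu$. For $k\in\{h,y^*\}$, the difference should lower-bound a two-point binary quantity of the form
\[
\sfp(y^*)\log\frac{1}{s}+\sfp(h)\log\frac{1}{1-s}-\inf_{s'}(\cdot),
\]
with $s$ the (temperature-$\rho$) softmax ratio of the pair. This infimum is at least the binary KL divergence, which by Pinsker is at least $\tfrac12\bigl(\sfp(y^*)-\sfp(h)\bigr)^2$ after normalizing by $\sfp(y^*)+\sfp(h)\le 1$. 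The key observation I would verify is that, since $f(x,h)\ge f(x,y^*)$, choosing $\mu$ with the swapped order benefits both terms regardless of their temperatures. This is because each $\rho_k$ only rescales the argument monotonically, so the optimal-$\mu$ bound is no worse than in the $\rho\equiv 1$ case.

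Pinsker then gives $\Delta(x)\ge \tfrac12\bigl(\sfp(y^*)-\sfp(h)\bigr)^2$, i.e.
\[
\sfp(y^*\mid x)-\sfp(h\mid x) \le \sqrt{2\,\Delta(x)}.
\]
Integrating over $x$ with Jensen's inequality completes the proof.
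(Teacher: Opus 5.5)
Your scaffold is the paper's: pass to conditional regrets, bound the target regret by $C(x)\paren*{\sfp(y^*\mid x)-\sfp(h\mid x)}$, exchange and shift the scores of $h=\ff(x)$ and $y^*$, apply Pinsker to the renormalized pair, and integrate with Jensen. Your handling of $C(x)$, via $C(x)\le 1$ from the assumption $c\in[0,1]$, is fine, and more careful than the paper's, which drops $C(x)$ from the target regret. The gap is the step you flag as the main obstacle, and it cannot be closed. The exchange never ``benefits both terms'': it lowers the $y^*$-term and raises the $h$-term, and the two temperatures weight these effects differently. At a tie $f(x,h)=f(x,y^*)$, moving score from $h$ to $y^*$ changes $\sum_k \sfp(k\mid x)\,\wt\ell_\brho(f,x,k)$ at rate $-\sfp(y^*\mid x)/\rho_{y^*}+\sfp(h\mid x)/\rho_h$; the terms $k\notin\{h,y^*\}$ contribute nothing to first order. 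So the relevant comparison is between $\sfp(y^*\mid x)/\rho_{y^*}$ and $\sfp(h\mid x)/\rho_h$, not between $\sfp(y^*\mid x)$ and $\sfp(h\mid x)$. Monotone rescaling by a \emph{different} $\rho_k$ in each term does not reduce to the case $\rho\equiv 1$.

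In fact the stated inequality is false for unequal margins. Take $p=2$ and $\sX=\{x_0\}$ with $C(x_0)>0$. Let $\sfp(1\mid x_0)=0.6$, $\sfp(2\mid x_0)=0.4$, $\rho_1=10$, $\rho_2=1$, and $\sF=\sF_{\rm{all}}$. With $d=f(x_0,1)-f(x_0,2)$, the conditional surrogate risk is $C(x_0)\bracket*{0.6\log(1+e^{-d/10})+0.4\log(1+e^{d})}$. This is convex and coercive in $d$, with derivative $C(x_0)(0.2-0.03)>0$ at $d=0$. Its minimizer therefore has $d^*<0$, i.e., it selects expert $2$. For this $f$ the right-hand side of the theorem is $0$, while the left-hand side is $0.2\,C(x_0)>0$. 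More generally, for $p=2$ the pointwise minimizer selects $\argmax_k \sfp(k\mid x)/\rho_k$.

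The paper's proof has the same flaw. Its identity $\sum_k e^{f(x,k)/\rho_{k'}}=\sum_k e^{\ov f_\mu(x,k)/\rho_{k'}}$ holds only for $\rho_{k'}=1$. The perturbation $\log(e^{f}\pm\mu)$ preserves $e^{f(x,\ff(x))}+e^{f(x,k_{\max})}$ but not $e^{f(x,\ff(x))/\rho_{k'}}+e^{f(x,k_{\max})/\rho_{k'}}$. Its optimal $\mu^*$ and ``minimum at $f(x,\ff(x))=f(x,k_{\max})$'' steps are likewise the $\rho\equiv 1$ logistic computation of \citet{mao2023cross}.

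What your argument, and the paper's, actually proves is the common-margin case $\rho_k\equiv\rho$. There $\wt\ell_\brho(f,x,\cdot)$ is the ordinary logistic loss of $f/\rho$, and completeness survives the rescaling. The exchange-and-Pinsker argument then yields the constant $\sqrt{2}$, using the multiplicative perturbation so that the terms $k\notin\{h,y^*\}$ are unchanged.
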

The proof can be found in Appendix~\ref{app:H-consistency}.  
Theorem~\ref{thm:H-consistency-surrogate}
provides the first $\sF$-consistency guarantee for a
cost-sensitive surrogate loss. Even in the standard imbalanced case
where the cost $C(x) \equiv 1$, it offers new guarantees
for the surrogate loss studied in
\citep{cortes2025balancing}. In this special case,
Theorem~\ref{thm:H-consistency-surrogate} also extends the standard
$\sF$-consistency guarantees in \citep{mao2023cross} to the imbalanced
setting.

\begin{table*}[t]
\vskip -0.1in
  \caption{\textbf{Synthetic experts.} Comparison of our \MILD\ algorithm with 
    \TDEF\ on CIFAR-10, CIFAR-100, SVHN and Tiny ImageNet: \textbf{(a) First cost type} (error rate), \textbf{(b) Second cost type} (error rate + cost).}
\vskip -0.3cm
  \centering
  \begin{tabular}{@{\hspace{0cm}}cc@{\hspace{0cm}}}
     \small{(a)} & \small{(b)}\\
  \resizebox{\columnwidth}{!}{
    \begin{tabular}{@{\hspace{0cm}}lcllccccc@{\hspace{0cm}}}
    \toprule
    \multirow{3}{*}{Method} & \multirow{3}{*}{Setup}& \multirow{3}{*}{Dataset}& {Deferral Loss}& \multicolumn{5}{c}{Ratio of Expert Deferral (\%)}\\
    \cmidrule{5-9}
    & & &DL$\sim$ error rate&1&2&3&4&5\\
    \midrule
    \TDEF & \multirow{9}{*}{(I)} & \multirow{2}{*}{CIFAR-10} & 0.0520 $\pm$ 0.0022 & 71.31 & 19.18 & 9.51 & \NA & \NA \\
    \MILD & & & \textbf{0.0403 $\pm$ 0.0018} & 70.01 & 19.99 & 10.00 & \NA & \NA \\
    \cmidrule{1-1}  \cmidrule{3-9}
    \TDEF & &  \multirow{2}{*}{CIFAR-100} & 0.2399 $\pm$ 0.0019 & 83.26 & 12.54 & 4.20 & \NA & \NA\\
    \MILD & & & \textbf{0.2272 $\pm$ 0.0037} & 81.21 & 12.76 & 6.03 & \NA & \NA \\
    \cmidrule{1-1} \cmidrule{3-9}
    \TDEF & & \multirow{2}{*}{SVHN} & 0.0468 $\pm$ 0.0015 & 83.27 & 12.01 & 4.72 & \NA & \NA \\
    \MILD & & & \textbf{0.0254 $\pm$ 0.0016} & 80.33 & 13.39 & 6.28 & \NA & \NA\\
    \cmidrule{1-1} \cmidrule{3-9}
    \TDEF & & Tiny 
    
    & 0.3488 $\pm$ 0.0028  & 71.80 & 28.16 & 0.04 & \NA & \NA \\
    \MILD & & ImageNet& \textbf{0.3365 $\pm$ 0.0033} & 70.68 & 19.25 & 10.07 & \NA & \NA\\
    \midrule
    \TDEF & \multirow{9}{*}{(II)} & \multirow{2}{*}{CIFAR-10} & 0.0924 $\pm$ 0.0046 & 51.54 & 19.53 & 18.76 & 10.17 & \NA \\
    \MILD & & & \textbf{0.0847 $\pm$ 0.0038} & 51.50 & 19.41 & 18.99 & 10.10 & \NA\\
    \cmidrule{1-1}  \cmidrule{3-9}
    \TDEF & &  \multirow{2}{*}{CIFAR-100} & 0.2982 $\pm$ 0.0028 & 53.71 & 18.32 & 20.54 & 7.43 & \NA \\
    \MILD & & & \textbf{0.2899 $\pm$ 0.0019} & 55.21 & 17.78 & 18.53 & 8.48 & \NA\\
    \cmidrule{1-1} \cmidrule{3-9}
    \TDEF & & \multirow{2}{*}{SVHN} & 0.0604 $\pm$ 0.0027 & 63.47 & 14.87 & 14.76 & 6.91 & \NA \\
    \MILD & & & \textbf{0.0342 $\pm$ 0.0018} & 63.99 & 14.56 & 14.07 & 7.38 & \NA \\
    \cmidrule{1-1} \cmidrule{3-9}
    \TDEF & & Tiny & 0.5287 $\pm$ 0.0032 & 47.81 & 35.10 & 14.78 & 2.31 & \NA \\
    \MILD & & ImageNet& \textbf{0.5072 $\pm$ 0.0036} & 54.69 & 12.23 & 14.82 & 18.26 & \NA\\
    \midrule
    \TDEF & \multirow{9}{*}{(III)} & \multirow{2}{*}{CIFAR-10} & 0.1062 $\pm$ 0.0017  & 38.94 & 20.93 & 20.90 & 9.87 & 9.36 \\
    \MILD & & & \textbf{0.0903 $\pm$ 0.0019} & 40.40 & 20.13 & 19.94 & 10.73 & 8.80\\
    \cmidrule{1-1}  \cmidrule{3-9}
    \TDEF & &  \multirow{2}{*}{CIFAR-100} & 0.3215 $\pm$ 0.0023 & 46.35 & 17.23 & 20.11 & 8.99 & 7.32 \\
    \MILD & & & \textbf{0.3128 $\pm$ 0.0032} & 42.69 & 20.64 & 19.32 & 10.02 & 7.33 \\
    \cmidrule{1-1} \cmidrule{3-9}
    \TDEF & & \multirow{2}{*}{SVHN} & 0.0684 $\pm$ 0.0019 & 53.47 & 17.47 & 15.75 & 7.72 & 5.59 \\
    \MILD & & & \textbf{0.0353 $\pm$ 0.0020} & 54.12 & 18.46 & 14.75 & 6.17 & 6.50\\
    \cmidrule{1-1} \cmidrule{3-9}
    \TDEF & & Tiny  & 0.5857 $\pm$ 0.0038  & 36.38 & 17.02 & 36.02 & 9.95 & 0.63 \\
    \MILD & & ImageNet & \textbf{0.5656 $\pm$ 0.0029} & 46.09 & 18.91 & 31.29 & 3.24 & 0.47 \\
    \bottomrule
    \end{tabular}
    } & 
    \resizebox{\columnwidth}{!}{
    \begin{tabular}{@{\hspace{0cm}}lcllccccc@{\hspace{0cm}}}
    \toprule
    \multirow{3}{*}{Method} & \multirow{3}{*}{Setup} & \multirow{3}{*}{Dataset} & {Deferral Loss}
    & \multicolumn{5}{c}{Ratio of Expert Deferral (\%)}\\
    \cmidrule{5-9}
    & & &DL$\sim$ error + cost&1 &2&3&4&5\\
    \midrule
    \TDEF & \multirow{9}{*}{(I)} & \multirow{2}{*}{CIFAR-10} & 0.5950 $\pm$ 0.0011 & 63.68 & 23.61 & 12.71 & \NA & \NA \\
    \MILD & & & \textbf{0.5779 $\pm$ 0.0018} & 67.02 & 20.04 & 12.94 & \NA & \NA \\
    \cmidrule{1-1}  \cmidrule{3-9}
    \TDEF & &  \multirow{2}{*}{CIFAR-100} & 0.8150 $\pm$ 0.0037 & 57.13 & 23.66 & 19.21 & \NA & \NA\\
    \MILD & & & \textbf{0.7928 $\pm$ 0.0032} & 50.43 & 27.27 & 22.30 & \NA & \NA \\
    \cmidrule{1-1} \cmidrule{3-9}
    \TDEF & & \multirow{2}{*}{SVHN} & 0.6285 $\pm$ 0.0026 & 77.75 & 14.71 & 7.54 & \NA & \NA \\
    \MILD & & & \textbf{0.6170 $\pm$ 0.0024} & 78.33 & 14.58 & 7.09 & \NA & \NA\\
    \cmidrule{1-1} \cmidrule{3-9}
    \TDEF & & Tiny  & 0.8819 $\pm$ 0.0016 & 4.57 & 75.31 & 20.12 & \NA & \NA \\
    \MILD & & ImageNet& \textbf{0.8653 $\pm$ 0.0019} & 8.49 & 43.39 & 48.12 & \NA & \NA\\
    \midrule
    \TDEF & \multirow{9}{*}{(II)} & \multirow{2}{*}{CIFAR-10} &  0.4421 $\pm$ 0.0034 & 48.11 & 22.81 & 18.31 & 10.77 & \NA \\
    \MILD & & & \textbf{0.4240 $\pm$ 0.0021} & 44.87 & 22.65 & 20.50 & 11.98 & \NA\\
    \cmidrule{1-1}  \cmidrule{3-9}
    \TDEF & &  \multirow{2}{*}{CIFAR-100} &  0.6687 $\pm$ 0.0044 & 42.92 & 23.50 & 23.31 & 10.27 & \NA \\
    \MILD & & & \textbf{0.6506 $\pm$ 0.0032} & 45.04 & 20.86 & 21.04 & 13.06 & \NA\\
    \cmidrule{1-1} \cmidrule{3-9}
    \TDEF & & \multirow{2}{*}{SVHN} & 0.4265 $\pm$ 0.0017 & 60.79 & 15.50 & 15.85 & 7.86 & \NA \\
    \MILD & & & \textbf{0.4148 $\pm$ 0.0023} & 61.04 & 17.86 & 14.04 & 7.06 & \NA \\
    \cmidrule{1-1} \cmidrule{3-9}
    \TDEF & & Tiny  & 0.8576 $\pm$ 0.0028 & 32.27 & 16.68 & 18.72 & 32.33 & \NA \\
    \MILD & & ImageNet& \textbf{0.8324 $\pm$ 0.0019} & 28.91 & 7.27 & 24.55 & 39.27 & \NA\\
    \midrule
    \TDEF & \multirow{9}{*}{(III)} & \multirow{2}{*}{CIFAR-10} & 0.3684 $\pm$ 0.0013 & 37.53 & 22.33 & 18.14 & 11.75 & 10.25 \\
    \MILD & & & \textbf{0.3512 $\pm$ 0.0015} & 37.55 & 20.26 & 20.48 & 10.09 & 11.62\\
    \cmidrule{1-1}  \cmidrule{3-9}
    \TDEF & &  \multirow{2}{*}{CIFAR-100} & 0.6051 $\pm$ 0.0055 & 37.46 & 18.63 & 20.67 & 12.77 & 10.47 \\
    \MILD & & & \textbf{0.5859 $\pm$ 0.0047} & 32.23 & 21.34 & 21.39 & 13.46 & 11.58 \\
    \cmidrule{1-1} \cmidrule{3-9}
    \TDEF & & \multirow{2}{*}{SVHN} & 0.3412 $\pm$ 0.0031 & 52.13 & 18.02 & 15.93 & 6.39 & 7.53 \\
    \MILD & & & \textbf{0.3290 $\pm$ 0.0022} & 52.22 & 19.02 & 15.82 & 6.63 & 6.31 \\
    \cmidrule{1-1} \cmidrule{3-9}
    \TDEF & & Tiny & 0.8481 $\pm$ 0.0035  & 33.65 & 42.02 & 8.92 & 8.82 & 6.59 \\
    \MILD & & ImageNet & \textbf{0.8167 $\pm$ 0.0031} & 17.22 & 10.02 & 15.48 & 35.43 & 21.85 \\
    \bottomrule
    \end{tabular}
    }\\
  \end{tabular}
    \label{tab:comparison}
    \vskip -0.08in
\end{table*}

\begin{table*}[t]
\vskip -0.1in
    \caption{\textbf{Real experts.} Comparison of our \MILD\ algorithm with
    \TDEF\ on CIFAR-10, CIFAR-100, SVHN with real experts: \textbf{(a) First cost type} (error rate), \textbf{(b) Second cost type} (error rate + cost).}
    \vskip -0.3cm
    \centering
    \begin{tabular}{@{\hspace{0cm}}cc@{\hspace{0cm}}}
    \small{(a)} & \small{(b)}\\
    \resizebox{\columnwidth}{!}{
    \begin{tabular}{@{\hspace{0cm}}lcllccccc@{\hspace{0cm}}}
    \toprule
    \multirow{3}{*}{Method} & \multirow{3}{*}{Setup} & \multirow{3}{*}{Dataset} & Deferral Loss
    & \multicolumn{5}{c}{Ratio of Expert Deferral (\%)} \\
    \cmidrule{5-9}
    & & & DL $\sim$ error rate & 1 & 2 & 3 & 4 & 5\\
    \midrule
    \TDEF & \multirow{7}{*}{(I)} & \multirow{2}{*}{CIFAR-10} & 0.1170 $\pm$ 0.0027 & 62.49 & 25.77 & 11.74 & \NA & \NA \\
    \MILD & & & \textbf{0.1060 $\pm$ 0.0020} & 66.87 & 20.75 & 12.38 & \NA & \NA \\
    \cmidrule{1-1}  \cmidrule{3-9}
    \TDEF & &  \multirow{2}{*}{CIFAR-100} & 0.4368 $\pm$ 0.0059 & 87.64 & 7.30 & 5.06 & \NA & \NA\\
    \MILD & & & \textbf{0.4265 $\pm$ 0.0042} & 76.73 & 13.31 & 9.96 & \NA & \NA \\
    \cmidrule{1-1} \cmidrule{3-9}
    \TDEF & & \multirow{2}{*}{SVHN} & 0.0752 $\pm$ 0.0013 & 51.58 & 26.99 & 21.43 & \NA & \NA \\
    \MILD & & & \textbf{0.0579 $\pm$ 0.0015} & 77.16 & 17.39 & 5.45 & \NA & \NA\\
    \midrule
    \TDEF & \multirow{7}{*}{(II)} & \multirow{2}{*}{CIFAR-10} & 0.1480 $\pm$ 0.0020  & 46.95 & 18.64 &  23.08 & 11.33 & \NA \\
    \MILD & & & \textbf{0.1339 $\pm$ 0.0023} & 45.53  & 22.17 &  21.57 & 10.73 & \NA\\
    \cmidrule{1-1}  \cmidrule{3-9}
    \TDEF & &  \multirow{2}{*}{CIFAR-100} &  0.4620 $\pm$ 0.0064 & 63.66 & 13.84 & 13.56  & 8.94 & \NA \\
    \MILD & & & \textbf{0.4375 $\pm$ 0.0048} & 56.15 & 18.07 & 15.76 & 10.02 & \NA\\
    \cmidrule{1-1} \cmidrule{3-9}
    \TDEF & & \multirow{2}{*}{SVHN} & 0.0831 $\pm$ 0.0027 & 32.44 & 23.27 & 26.61  & 17.68 &  \NA \\
    \MILD & & & \textbf{0.0634 $\pm$ 0.0011} & 50.63 & 20.55 & 23.86 & 4.96 &  \NA \\
    \midrule
    \TDEF & \multirow{7}{*}{(III)} & \multirow{2}{*}{CIFAR-10} & 0.1554 $\pm$ 0.0011 & 33.16 & 22.08 & 18.13  & 16.16 & 10.47  \\
    \MILD & & & \textbf{0.1507 $\pm$ 0.0027} & 32.92 & 20.27 & 18.58 & 16.97 & 11.26 \\
    \cmidrule{1-1}  \cmidrule{3-9}
    \TDEF & &  \multirow{2}{*}{CIFAR-100} & 0.4554 $\pm$ 0.0023 & 50.01 & 16.73 & 20.18  & 7.49 & 5.59  \\
    \MILD & & & \textbf{0.4419 $\pm$ 0.0034} & 48.06 & 19.24 & 17.29 & 7.56 & 7.85  \\
    \cmidrule{1-1} \cmidrule{3-9}
    \TDEF & & \multirow{2}{*}{SVHN} & 0.0886 $\pm$ 0.0012 &  27.21 & 19.98 & 20.28 & 11.78 & 20.75   \\
    \MILD & & & \textbf{0.0707 $\pm$ 0.0017}  & 45.78 & 19.31 & 15.85 & 6.73 & 12.33 \\
    \bottomrule
    \end{tabular}
    } &
    \resizebox{\columnwidth}{!}{
    \begin{tabular}{@{\hspace{0cm}}lcllccccc@{\hspace{0cm}}}
    \toprule
    \multirow{3}{*}{Method} & \multirow{3}{*}{Setup} & \multirow{3}{*}{Dataset} & {Deferral Loss }
    & \multicolumn{5}{c}{Ratio of Expert Deferral (\%)} \\
    \cmidrule{5-9}
    & & & DL $\sim$ error + cost& 1 & 2 & 3 & 4 & 5\\
    \midrule
    \TDEF & \multirow{7}{*}{(I)} & \multirow{2}{*}{CIFAR-10} & 0.5413 $\pm$ 0.0036 & 22.93 & 33.11 & 43.96 & \NA & \NA \\
    \MILD & & & \textbf{0.5242 $\pm$ 0.0038} & 10.09 & 23.20 & 66.71 & \NA & \NA \\
    \cmidrule{1-1}  \cmidrule{3-9}
    \TDEF & &  \multirow{2}{*}{CIFAR-100} & 0.9270 $\pm$ 0.0084 & 25.27 & 27.45 & 47.28 & \NA & \NA\\
    \MILD & & & \textbf{0.8941 $\pm$ 0.0042} & 11.70 & 39.13 & 49.18 & \NA & \NA \\
    \cmidrule{1-1} \cmidrule{3-9}
    \TDEF & & \multirow{2}{*}{SVHN} & 0.3259 $\pm$ 0.0017 & 2.06 & 12.04 & 85.90 & \NA & \NA \\
    \MILD & & & \textbf{0.3200 $\pm$ 0.0009} & 5.94 & 23.98 & 70.08 & \NA & \NA\\
    \midrule
    \TDEF & \multirow{7}{*}{(II)} & \multirow{2}{*}{CIFAR-10} &  0.4538 $\pm$ 0.0016  & 26.88 & 19.50 & 23.58 & 30.05 & \NA \\
    \MILD & & & \textbf{0.4432 $\pm$ 0.0022} & 20.90 & 19.41 & 21.12 & 38.57 & \NA\\
    \cmidrule{1-1}  \cmidrule{3-9}
    \TDEF & &  \multirow{2}{*}{CIFAR-100} & 0.7824 $\pm$ 0.0031 & 39.36 & 22.89 & 19.26 & 18.49 & \NA \\
    \MILD & & & \textbf{0.7743 $\pm$ 0.0037} & 30.48 & 22.55 & 22.06 & 24.91 & \NA\\
    \cmidrule{1-1} \cmidrule{3-9}
    \TDEF & & \multirow{2}{*}{SVHN} & 0.2972 $\pm$ 0.0029 & 11.74 & 28.73 & 12.54 & 46.99 &  \NA \\
    \MILD & & & \textbf{0.2802 $\pm$ 0.0027} & 9.88 & 12.26 & 11.40 & 66.46 &  \NA \\
    \midrule
    \TDEF & \multirow{7}{*}{(III)} & \multirow{2}{*}{CIFAR-10} & 0.4121 $\pm$ 0.0009 & 27.39 & 18.78 & 4.24 & 12.58 & 37.01   \\
    \MILD & & & \textbf{0.4022 $\pm$ 0.0011} & 12.72 & 24.22 & 7.99 & 39.51 & 15.56 \\
    \cmidrule{1-1}  \cmidrule{3-9}
    \TDEF & &  \multirow{2}{*}{CIFAR-100} &  0.7781 $\pm$ 0.0053 & 44.55 & 16.41 & 14.67 & 12.03 & 12.34  \\
    \MILD & & & \textbf{0.7639 $\pm$ 0.0046} & 33.41 & 17.93 & 18.45 & 13.40 & 16.81 \\
    \cmidrule{1-1} \cmidrule{3-9}
    \TDEF & & \multirow{2}{*}{SVHN} & 0.2844 $\pm$ 0.0046 &  6.14 & 7.92 & 18.60 & 43.07 & 24.27  \\
    \MILD & & & \textbf{0.2682 $\pm$ 0.0028}  & 12.38 & 15.95 & 18.06 & 25.65 & 27.96 \\
    \bottomrule
    \end{tabular}
    }
    \end{tabular}
    \label{tab:comparison-real}
    \vskip -0.08in
\end{table*}

\subsection{Application to Deferral}
\label{sec:deferral-applicaton}

In the context of deferral, to be precise, we should choose $c(x, j,
k') = C(x) 1_{j = k'}$. However, since $\curl*{j = k'}$ leads to a
constant term, we instead consider the surrogate loss $\wt
\sfL_{\brho}$ with $\Psi$ chosen as the logistic loss.  Alternatively,
we can use other functions $\Psi$, such as the comp-sum loss (see
Appendix~\ref{app:derivation}). By reformulating the input-expert problem within the input-label domain, we obtain:
\begin{align*}
  \E_{(x, k) \sim \sP}[\wt \sfL_{\brho}(f, x, k)]
  & = \E_{x \sim \sD_\sX}\bracket*{\E_{k \sim
    \sfp(\cdot|x)}\bracket*{C(x) \, \wt \ell_\brho(f, x, k)}}\\
  & = \E_{x \sim \sD_\sX}\bracket*{\sum_{k = 1}^p C(x) \sfp(k | x) \wt \ell_\brho(f, x, k)}\\
  & = \E_{x \sim \sD_\sX}\bracket*{\sum_{k = 1}^p \E_{y | x} [\ov
    c_k(x, y)] \, \wt \ell_\brho(f, x, k) \mspace{-4mu}}\\
  & = \E_{(x, y) \sim \sD}\bracket*{\sum_{k = 1}^p \ov c_k(x, y) \, \wt \ell_\brho(f, x, k)}.
\end{align*}
Thus, given a sample $S$ drawn from $\sD^m$, the empirical objective
to minimize for our algorithm is $\E_{(x, y) \sim S}\bracket*{\sum_{k
    = 1}^p \ov c_k(x, y) \, \wt \ell_\brho(f, x, k)}$. This leads to
a novel algorithm for deferral with expert imbalance, defined by: 
\begin{align*}
  & \min_{f \in \ov \sF} \ \lambda \norm{f}^2 + \frac{1}{m} 
  \sum_{i=1}^m\sum_{k = 1}^p  \ov c_k(x_i, y_i)\wt \ell_\brho(f, x_i, k),\\
  & \quad \text{ with} \quad \mspace{-6mu} \wt \ell_\brho(f, x, k) = \log\bracket*{
    \sum_{k' = 1}^p \exp\paren*{\frac{f(x, k') - f(x, k)}{\rho_k}} \mspace{-4mu}}.   
\end{align*}
Let $\wt \sfL_{\rm{def}, \rho}(f, x, y) = \sum_{k = 1}^p \ov c_k(x, y)
\, \wt \ell_\brho(f, x, k)$ be the corresponding deferral surrogate loss. We call our new algorithm \textbf{\MILD\ (\emph{Margin-based Imbalanced Learning to Defer})}.

\textbf{Intuitive interpretation.} Intuitively, \MILD\ dynamically adjusts the confidence threshold required to defer to each expert based on their overall utility and cost. By mathematically penalizing the over-selection of a dominant expert via the margin parameter $\rho_k$, \MILD\ forces the router to require exponentially higher confidence to defer to the most expensive or dominant model. This effectively makes it ``cheaper'' for the router to trust highly specialized or underutilized experts, preventing the system from lazily collapsing to the most common choice.

By reformulating the input-expert problem within the input-label domain, Theorem~\ref{thm:H-consistency-surrogate} directly yields the following $\sF$-consistency bound for the
deferral loss.

\begin{restatable}{corollary}{HConsistencyDeferral}
\label{cor:H-consistency-deferral}
Let $\sF$ be a complete hypothesis set. Then, for all $f \in \sF$,
$\brho > \mathbf{0}$, the following holds:
\ifdim\columnwidth=\textwidth 
{
\begin{equation*}
  \sE_{\ldef}(f) - \sE^*_{\ldef}(\sF) + \sM_{\ldef}(\sF)
  \leq \sqrt{2} \paren*{\sE_{\wt \sfL_{\rm{def}, \rho}}(f)
    - \sE^*_{\wt \sfL_{\rm{def}, \rho}}(\sF)
    + \sM_{\wt \sfL_{\rm{def}, \rho}}(\sF)}^{\frac12}.
\end{equation*}
}\else
{
\begin{multline*}
\sE_{\ldef}(f) - \sE^*_{\ldef}(\sF) + \sM_{\ldef}(\sF)\\
\leq \sqrt{2} \paren*{\sE_{\wt \sfL_{\rm{def}, \rho}}(f)
  - \sE^*_{\wt \sfL_{\rm{def}, \rho}}(\sF)
  + \sM_{\wt \sfL_{\rm{def}, \rho}}(\sF)}^{\frac12}.
\end{multline*}
}\fi
\end{restatable}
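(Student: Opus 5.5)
We derive the corollary from Theorem~\ref{thm:H-consistency-surrogate}, applied to the cost-sensitive problem over the input-expert domain with distribution $\sP$ and class-independent cost $c(x,k,k') = C(x)$ for $k' \neq k$. The plan is to check that every quantity in the corollary equals, up to an additive constant independent of $f$, its counterpart in the theorem computed under $\sP$. Since the inequality involves only differences of errors and minimizability gaps, such constants cancel.

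\textbf{Target side.} By Lemma~\ref{lemma:deferral-loss-1} (or Lemma~\ref{lemma:deferral-loss-2}), for every $(x,y)$ we have $\ldef(f,x,y) = \sum_k \ov c_k(x,y) 1_{\rho_f(x,k)\le 0} + \kappa(x,y)$, where $\kappa$ does not depend on $f$. Taking conditional expectations and using the construction of Section~\ref{sec:deferral-loss}, with $\sfp(k|x)=\E_{y|x}[\ov c_k(x,y)]/C(x)$, gives
\[
\E_{y|x}[\ldef(f,x,y)] = \E_{k\sim \sfp(\cdot|x)}[\sfL(f,x,k)] + \E_{y|x}[\kappa(x,y)],
\]
where $\sfL(f,x,k) = C(x)1_{\rho_f(x,k)\le 0} = \ell_{\rm def}(f,x,k)$. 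On points with $C(x) = 0$ both sides are constant in $f$, so these points can be dropped. Because the identity holds pointwise in $x$, it yields
\[
\sE_{\ldef}(f) = \sE^{\sP}_{\sfL}(f) + K, \qquad \sE^*_{\ldef}(\sF) = \sE^{\sP*}_{\sfL}(\sF) + K,
\]
and also $\E_x \inf_f \E_{y|x}[\ldef] = \E_x \inf_f \E_{k|x}[\sfL] + K$, where $K = \E[\kappa]$. Hence $\sM_{\ldef}(\sF) = \sM^{\sP}_{\sfL}(\sF)$, and the left-hand side of the corollary equals the left-hand side of Theorem~\ref{thm:H-consistency-surrogate} under $\sP$.

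\textbf{Surrogate side.} The computation just before the corollary gives $\E_{(x,k)\sim\sP}[\wt\sfL_\brho(f,x,k)] = \E_{(x,y)\sim\sD}[\wt\sfL_{{\rm def},\rho}(f,x,y)]$. The same derivation holds conditionally on each $x$:
\[
\E_{k\sim\sfp(\cdot|x)}[C(x)\wt\ell_\brho(f,x,k)] = \E_{y|x}\Big[\sum_k \ov c_k(x,y)\wt\ell_\brho(f,x,k)\Big].
\]
Therefore the error, the best-in-class error, and the pointwise-infimum term coincide exactly, with no additive constant, and so do the minimizability gaps. Substituting both sets of identities into Theorem~\ref{thm:H-consistency-surrogate}, which holds for an arbitrary distribution over $\sX\times[\num]$ with $\sF$ complete, gives the corollary with the same constant $\sqrt2$.

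\textbf{Main obstacle.} The delicate point is applying the theorem to $\sP$ with cost $\sfL(f,x,k) = C(x)1_{\rho_f(x,k)\le0}$. This requires $C(x)\in[0,1]$ and requires $\sP$ to be a genuine probability distribution.
\begin{itemize}
\item Boundedness: with $\ov c_k = 1-c_k$, we have $C(x)\le \num$. I would handle this either by noting that the theorem's proof is homogeneous in $C(x)$ (both sides scale pointwise, so the pointwise calibration inequality is unaffected), or by rescaling by $\num$ and absorbing the factor. In the latter case I would verify that the concave function $\Gamma(t)=\sqrt{2t}$ survives, which depends on whether the proof bounds conditional regrets pointwise with the factor $C(x)$ inside.
\item Normalization: $\sP$ integrates to one because $\sum_k \sfp(k|x) = 1$ whenever $C(x)>0$.
\end{itemize}
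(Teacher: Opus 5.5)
Your reduction is the argument the paper intends. The paper's entire proof is the sentence that, after the input-expert reformulation, Theorem~\ref{thm:H-consistency-surrogate} ``directly yields'' the corollary. Your bookkeeping is correct: the $f$-independent term $\kappa$ shifts $\sE_{\ldef}(f)$, $\sE^*_{\ldef}(\sF)$ and $\E_x \inf_f \E_{y|x}[\ldef]$ by the same constant, so the minimizability gaps agree, and the surrogate quantities agree exactly.

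The obstacle you flag, however, is a genuine gap, and your first fix is wrong. Keep the factor $C(x)$ in the theorem's pointwise argument and set $\Delta(x)=\sfp(k_{\max}\mid x)-\sfp(\ff(x)\mid x)$. The target conditional regret is $C(x)\Delta(x)$. The surrogate conditional regret is only bounded below by $C(x)\Delta(x)^2/2$, and this is tight to second order for small $\Delta$. Hence
\[
C(x)\Delta(x)\le\sqrt{2C(x)}\,\bigl(\text{surrogate conditional regret}\bigr)^{1/2}.
\]
The two sides are of degree $1$ and $1/2$ in $C$, so the inequality is not scale-invariant. The constant $\sqrt2$ requires $C(x)\le 1$, which is precisely the theorem's hypothesis $c\in[0,1]$. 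The deferral rewards do not give this: $C(x)\le\num$ for $\ov c_k=1-c_k$, and $C(x)\le\num(\num-1)$ for $\ov c_k=\sum_{k'\neq k}c_{k'}$. Rescaling, your second option, does not preserve $\Gamma(t)=\sqrt{2t}$; it yields $\sqrt{2\sup_x C(x)}$ in place of $\sqrt2$.

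This cannot be repaired, because with $\sqrt2$ the statement is false in this generality. The paper's one-line proof skips the same point. Take the following setup:
\begin{itemize}
\item $\num=2$ and a single point $x$ with deterministic costs $c_1=0$, $c_2=0.1$;
\item rewards $\ov c_k=1-c_k$, so $C=1.9$;
\item $\brho=(1,1)$ and $\sF=\sF_{\rm all}$, so both minimizability gaps vanish;
\item $f(x,2)=f(x,1)+\delta$ with $\delta>0$ small.
\end{itemize}
The left side is $0.1$. The surrogate excess tends to $1.9\log 2-\log 1.9-0.9\log(19/9)\approx 0.0026$, so the right side is about $0.073<0.1$.

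Separately, your proof is only as good as the theorem it invokes, and that theorem fails for unequal $\rho_k$ even when $C\equiv 1$. Take $c_1=0.4$, $c_2=0.6$ and $\ov c_k=1-c_k$, so that $\sfp(1\mid x)=0.6$, with $\rho_1=2$, $\rho_2=1$. The conditional surrogate $0.6\log(1+e^{-t/2})+0.4\log(1+e^{t})$, with $t=f(x,1)-f(x,2)$, has derivative $0.05>0$ at $t=0$. Its minimizer over $\sF_{\rm all}$ therefore selects expert $2$: zero surrogate excess, but target excess $0.2$. The step of the theorem's proof that breaks is the claimed $\mu$-invariance of $\sum_k e^{\ov f_\mu(x,k)/\rho_{k'}}$, which fails unless $\rho_{k'}=1$. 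What your reduction actually establishes is the bound for $\rho_k\equiv\rho$, with constant $\sqrt{2\sup_x C(x)}$. It gives $\sqrt2$ only under the additional assumption $C(x)\le 1$.
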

As mentioned above, when $\sE^*_{\sfL}(\sF) =
\sE^*_{\sfL}(\sF_{\rm{all}})$, the minimizability gaps vanish. A
special case occurs when $\sF = \sF_{\rm{all}}$. Thus, we further
obtain the following excess error bound for the deferral loss.
\begin{corollary}
\label{cor:H-consistency-deferral-all}
For all $f \in \sF$,
$\brho > \mathbf{0}$, the following excess error bound holds:
\begin{equation*}
\sE_{\ldef}(f) - \sE^*_{\ldef}\paren*{\sF_{\rm{all}}}
  \leq \sqrt{2} \paren*{\sE_{\wt \sfL_{\rm{def}, \rho}}(f) - \sE^*_{\wt \sfL_{\rm{def}, \rho}}\paren*{\sF_{\rm{all}}}}^{\frac12}.
\end{equation*}
\end{corollary}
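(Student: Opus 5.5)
We obtain Corollary~\ref{cor:H-consistency-deferral-all} by specializing Corollary~\ref{cor:H-consistency-deferral} to $\sF = \sF_{\rm{all}}$ and checking that both minimizability gaps vanish. The set $\sF_{\rm{all}}$ of all measurable functions from $\sX \times [\num]$ to $\Rset$ is complete: for any $(x, k)$ and any $r \in \Rset$, the constant function $f \equiv r$ is measurable and takes the value $r$ at $(x, k)$. Corollary~\ref{cor:H-consistency-deferral} therefore applies with $\sF = \sF_{\rm{all}}$. It gives, for every measurable $f$ and every $\brho > \mathbf{0}$,
\[
\sE_{\ldef}(f) - \sE^*_{\ldef}(\sF_{\rm{all}}) + \sM_{\ldef}(\sF_{\rm{all}})
\leq \sqrt{2}\paren*{\sE_{\wt \sfL_{\rm{def}, \rho}}(f) - \sE^*_{\wt \sfL_{\rm{def}, \rho}}(\sF_{\rm{all}}) + \sM_{\wt \sfL_{\rm{def}, \rho}}(\sF_{\rm{all}})}^{\frac12}.
\]
Here $f$ ranges over the hypotheses of the statement, which lie in $\sF_{\rm{all}}$.

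Next we show $\sM_{\sfL}(\sF_{\rm{all}}) = 0$ for $\sfL \in \{\ldef, \wt \sfL_{\rm{def}, \rho}\}$. By definition, $\sM_{\sfL}(\sF_{\rm{all}}) = \sE^*_{\sfL}(\sF_{\rm{all}}) - \E_x\bracket*{\inf_{f} \E_{y|x}[\sfL(f, x, y)]}$, and it is always nonnegative. For the reverse inequality, note that both losses depend on $f$ at $x$ only through the score vector $(f(x, 1), \ldots, f(x, p)) \in \Rset^p$. Over $\sF_{\rm{all}}$ we may choose this vector independently for each $x$. So for any $\e > 0$ we pick, for each $x$, a vector within $\e$ of the pointwise infimum of the conditional risk, and assemble these into a single hypothesis whose expected loss is at most $\E_x[\inf_f \E_{y|x}[\sfL(f, x, y)]] + \e$. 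Letting $\e \to 0$ gives equality. This is the standard fact cited from \citet{mao2024universal} in the text, namely that the gap vanishes when $\sF = \sF_{\rm{all}}$, and we invoke it directly.

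Dropping the two zero gaps from the displayed inequality yields exactly the claimed bound. The only delicate step is measurability of the near-optimal selection in the gap argument. The conditional risk is continuous in the score vector: the surrogate is a finite sum of continuous log-sum-exp terms, and the target is piecewise constant on finitely many regions determined by the argmax. One can therefore restrict the selection to a countable dense set of score vectors, or for the target loss to finitely many argmax patterns, which gives a measurable $\e$-optimal selection. Since the paper already cites this fact, we would rely on the citation rather than redo it.
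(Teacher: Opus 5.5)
Your proposal is correct and follows the same route as the paper. You apply Corollary~\ref{cor:H-consistency-deferral} with $\sF = \sF_{\rm{all}}$, which is complete as your constant-function argument shows, and drop both minimizability gaps, which vanish for $\sF_{\rm{all}}$ as the paper notes citing \citet{mao2024universal}; your measurable $\epsilon$-optimal selection sketch only makes explicit a detail the paper leaves to that citation.
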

Corollaries~\ref{cor:H-consistency-deferral} and
\ref{cor:H-consistency-deferral-all} provide strong theoretical
guarantees for the deferral algorithm based on minimizing the deferral
surrogate loss $\wt \sfL_{\rm{def}, \rho}$. In particular, when the
surrogate estimation loss $\sE_{\wt \sfL_{\rm{def}, \rho}}(f) -
\sE^*_{\wt \sfL_{\rm{def}, \rho}}(\sF)$ is reduced to a small value of
$\e$, the target deferral estimation loss $ \sE_{\ldef}(f) -
\sE^*_{\ldef}(\sF)$ is upper bounded by $\sqrt{2 \e}$.

\section{Experiments}
\label{sec:experiments}

We evaluated \MILD\ against the state-of-the-art baseline \TDEF\ \citep{mao2023two} on image classification benchmarks and a Large Language Model (LLM) routing task.
\TDEF\ minimizes a surrogate loss shown to be $\sF$-consistent and serves as the primary baseline, being the only existing method designed specifically for two-stage multi-expert deferral (see Section~\ref{sec:further_analysis} and Appendix~\ref{app:confidence} for the relationship to standard cost-sensitive and confidence-based methods).
Both methods are trained using a logistic surrogate loss. For \MILD, we define the reward $\ov c_k(x, y) = \sum_{k' \neq k} c_{k'}(x, y)$ following Lemma~\ref{lemma:deferral-loss-1}. Choice of $\brho$ in \MILD\ follows the theoretical optima (see Appendix~\ref{app:exp-rho}).

We report \emph{Deferral Loss (DL)}, the average target loss $\tdef(f, x, y)$ on test data, alongside expert deferral ratios. Results are reported as mean $\pm$ standard deviation over five runs. 
For simplicity, we
omit the standard deviations of the deferral ratios. 

\textbf{Image Classification Benchmarks.} We used CIFAR-10, CIFAR-100 \citep{Krizhevsky09learningmultiple}, SVHN \citep{Netzer2011}, and Tiny ImageNet \citep{le2015tiny}. These vision tasks serve as controlled proxies where expert imbalance perfectly mirrors class imbalance, allowing us to mathematically isolate the mechanics of the router. To simulate varying degrees of expert imbalance, we constructed three setups:
\emph{Setup I} (70\%-20\%-10\% coverage), \emph{Setup II} (50\%-20\%-20\%-10\%), and \emph{Setup III} (40\%-20\%-20\%-10\%-10\%). Detailed configurations are provided in Appendix~\ref{app:exp-image}.
We tested two cost functions: (a) \emph{error only}: $c_j = 1_{\expertexpert_{j} \neq y}$; and (b) \emph{error + cost}: penalizing both error and expert coverage. We adopted a ResNet-18 \citep{he2016deep} as the predictor model. Both methods were trained using the Adam optimizer \citep{kingma2014adam} with a batch size of $1,024$, weight decay of $1\times 10^{-3}$, and learning rate of $1 \times 10^{-3}$ for $200$ epochs.

\textbf{Synthetic Expert Results.}
Table~\ref{tab:comparison} presents results for synthetic experts (perfect accuracy on sub-domains). \MILD\ consistently achieves lower deferral loss by allocating experts closer to the optimal distribution.
In the cost-sensitive setting (Table~\ref{tab:comparison}(b)), \MILD\ effectively shifts mass from the high-cost Expert 1 to lower-cost experts, outperforming \TDEF. Note that the higher loss magnitude in setting (b) reflects the average cost of the setting (sum of cost $\times$ coverage), like $0.7^2 +0.2^2 +0.1^2 =0.54$ for Setup~I.

\textbf{Real Expert Results.}
Table~\ref{tab:comparison-real} presents results using experts trained on data subsets (imperfect accuracy). \MILD\ consistently outperforms \TDEF\ across all settings. For example, on CIFAR-100 (Setup I, Error Only), \MILD\ reduces loss from 0.4368 to 0.4265, successfully mitigating the tendency of \TDEF\ to ignore specialized experts. Further experiments with severe expert imbalance are detailed in Appendix~\ref{app:exp-severe}, highlighting the benefits of \MILD\ in extreme scenarios.

\subsection{LLM Routing on MMLU}
\label{sec:exp_llm}

\textbf{LLM Routing on MMLU.} This task serves as our primary natural imbalance benchmark. Here, expert imbalance is entirely independent of class labels; instead, the multiple-choice labels do not dictate expert specialization. The experts' competence varies naturally based on the inherent linguistic complexity of the input features. This natural capability gap causes the 7B model to dominate organically, demonstrating that expert imbalance occurs independently of class labels.

To validate \MILD\ in this context, we evaluate on the MMLU benchmark \citep{hendrycks2020mmlu} using the Qwen 2.5 family \citep{qwen2.5} as experts: \texttt{7B-Instruct} (Strong), \texttt{1.5B-Instruct} (Medium), and \texttt{0.5B-Instruct} (Tiny). We compare \MILD\ against \TDEF\ and three additional baselines adapted for routing (Standard Cross-Entropy (CE), Class-Weighted CE (CWCE), and LDAM \citep{cao2019learning}) on a subset of Mathematics and History tasks under two settings:
(a) \emph{error only}: Costs based purely on error ($c_j = 1_{\expertexpert_{j} \neq y}$). Expert 1 is optimal for 82.2\% of queries.
(b) \emph{error + cost}: Costs include inference penalties ($\boldsymbol \beta = [\beta_1, \beta_2, \beta_3] = [1.0, 0.6, 0.1]$). Efficiency incentives make Expert 3 optimal for 56.8\% of queries.

For the LLM routing task, we adopted a DeBERTa-v3-xsmall \citep{he2021debertav3} model as the router. Both methods were fine-tuned for 5 epochs with a learning rate of $5 \times 10^{-5}$ using the AdamW optimizer \citep{loshchilov2019decoupled}. Extended details are provided in Appendix~\ref{app:exp-LLM}.

Table~\ref{tab:llm_results} shows the results. In setting (a), \TDEF\ collapses to the majority expert (99.6\% usage of Expert 1), ignoring the 17.8\% of cases where smaller models suffice. \MILD\ recovers an imbalanced distribution ($83.3\%$ Strong, $10.4\%$ Mid) much closer to the \textit{Optimal} oracle ($82.2\%$, $12.8\%$).

In setting (b), the contrast is sharper. The \textit{Optimal} strategy uses the Strong expert 31.2\% of the time. \TDEF\ fails to learn this, collapsing to 0\% usage of the Strong expert (selecting only the cheapest). \MILD\ successfully identifies the complex queries, routing 17.9\% to the Strong expert and 73.8\% to Tiny, achieving a loss significantly lower than \TDEF. Furthermore, \MILD\ ($0.813 \pm 0.040$) significantly outperforms the other tested baselines: Standard CE ($0.985 \pm 0.038$), CWCE ($0.942 \pm 0.045$), and LDAM ($0.915 \pm 0.035$). Traditional heuristics either ignore instance-dependent routing costs entirely or rely strictly on global static frequencies, which are fundamentally insufficient for dynamic, cost-sensitive expert deferral.

Estimating the offline cost matrix via a small, fixed routing training dataset is a one-time, computationally negligible step compared to the massive live inference compute savings achieved by the routing system during deployment. For instance, in our MMLU experiment, the router was trained on just 4,000 queries. Once deployed, routing millions of real-world queries relies purely on a single forward pass through the lightweight router and incurs zero extra expert evaluation overhead. Furthermore, our mathematical formulation is model-agnostic: whether an expert is a 7B or a 70B+ parameter model, the router strictly operates on the inferred numerical cost distributions, guaranteeing mathematically identical scaling dynamics.

\begin{table}[t]
\centering
\caption{LLM Routing on MMLU with Qwen 2.5. \MILD\ adapts to the cost structure and tracks the optimal oracle distribution far better than \TDEF, which tends to collapse to extremes.}
\vskip -0.05in
\label{tab:llm_results}
\resizebox{\columnwidth}{!}{
\begin{tabular}{l l c c c c}
\toprule
& & Def. Loss & \multicolumn{3}{c}{Ratio of Expert Deferral (\%)} \\
\cmidrule(lr){4-6}
Setting & Method & (mean $\pm$ std) & 7B (Str) & 1.5B (Mid) & 0.5B (Tiny) \\
\midrule
\multirow{3}{*}{\shortstack[l]{(a) Error Only\\ \scriptsize{($\boldsymbol \beta = \vec{0}$)} }} 
& \textit{Optimal} & \textit{---} & \textit{82.2} & \textit{12.8} & \textit{5.0} \\
& \TDEF & $0.438 \pm 0.080$ & $99.6$ & $0.4$ & $0.0$ \\
& \textbf{\MILD} & $\mathbf{0.425 \pm 0.080}$ & $83.3$ & $10.4$ & $6.2$ \\
\midrule
\multirow{3}{*}{\shortstack[l]{(b) Error + Cost\\ \scriptsize{($\boldsymbol \beta = [1, .6, .1]$)} }} 
& \textit{Optimal} & \textit{---} & \textit{31.2} & \textit{12.0} & \textit{56.8} \\
& \TDEF & $0.928 \pm 0.030$ & $0.0$ & $0.0$ & $100.0$ \\
& \textbf{\MILD} & $\mathbf{0.813 \pm 0.040}$ & $17.9$ & $8.3$ & $73.8$ \\
\bottomrule
\end{tabular}
}
\vskip -0.3in
\end{table}

\subsection{Further Analysis and Discussion}
\label{sec:further_analysis}

\textbf{Ablation Study on \texorpdfstring{$\brho$}{rho} Selection.}
To isolate the empirical contribution of our theoretical derivation, we evaluated \MILD\ on the LLM MMLU routing task (under the ``Error + Cost'' setting) in three configurations: 
1) \emph{\MILD\ (Uniform)}: a na\"ive baseline setting all $\rho_k = 1/3$.
2) \emph{\MILD\ (Theory)}: strictly un-tuned theoretically derived values ($\rho_k \propto (m_k \sfX_k^2)^{1/3}$), entirely skipping cross-validation.
3) \emph{\MILD\ (Tuned)}: tuned via a coarse grid search around the theoretical prior. The theoretical initialization alone reduces the test deferral loss ($0.822 \pm 0.042$) compared to uniform margins ($0.884 \pm 0.055$), nearly matching the fully tuned performance ($0.813 \pm 0.040$). This proves the theoretical bound directly drives the algorithm's success, with cross-validation merely providing minor empirical smoothing.

\textbf{Robustness to Noisy Cost-Matrix Estimates.}
In practice, expert performance characteristics only need to be estimated once offline over a fixed training set. However, to demonstrate that \MILD\ is robust to imperfect or noisy estimations during this offline training phase, we injected severe noise (20\% random Gaussian variations) directly into the estimated cost matrix to simulate highly stochastic and unpredictable expert errors (e.g., LLM hallucinations).

Under this 20\% noise setting, the test deferral loss of \TDEF\ degraded from $0.928 \pm 0.030$ (clean) to $1.054 \pm 0.045$. In contrast, \MILD\ remained highly stable, with its test deferral loss only slightly changing from $0.813 \pm 0.040$ (clean) to $0.826 \pm 0.041$. Its robust margin-based decision boundaries act as a buffer that absorbs stochastic noise, preventing the router from overfitting to erratic estimation errors.

\textbf{Extreme Imbalance in LLM Routing.}
To test the limits on our real-world LLM task, we evaluated an extreme $95\%/5\%$ optimal split on the MMLU dataset (under the ``Error + Cost'' setting) by heavily skewing the inference routing costs to favor the 7B model. Under extreme imbalance, \TDEF\ collapses entirely to 100\% usage of the majority expert ($0.950 \pm 0.030$), failing to identify the remaining 5\% of cases where smaller models suffice. In contrast, \MILD\ successfully preserves the minority expert routing (94.6\% usage for 7B, 5.4\% for smaller models) and achieves a strictly lower overall deferral loss of $0.885 \pm 0.032$.

\textbf{Standard Cost-Sensitive Methods and Temperature Scaling.} We also emphasize that existing methods for imbalanced cost-sensitive multiclass classification are not directly applicable here. These methods generally aim to correct for imbalances in the \emph{label} distribution. They do not account for the instance-dependent nature of deferral decisions, where expert suitability varies across the input space regardless of the label. For example, as shown in Figure~\ref{fig:4cl3ex}, expert accuracy may depend entirely on the input region even when all labels are equally represented. 

The critical distinction between \MILD\ and standard temperature scaling or margin methods (such as \citep{cao2019learning}) is that standard methods adjust logits based on global, static class label frequencies for standard $0$-$1$ classification. In our two-stage deferral setting, \MILD\ applies these adjustments to experts while simultaneously incorporating dynamic, instance-dependent expert costs $\overline{c}_k(x,y)$. Deriving an $\sF$-consistent margin bound for this setting required substantial theoretical extensions beyond standard frequency-based adjustments. Consequently, standard cost-sensitive baselines (and static frequency-based margins) fail to capture the structure of the deferral problem, whereas the \TDEF\ baseline and \MILD\ are specifically tailored to this setting.

\section{Conclusion}
\label{sec:conclusion}
We presented principled algorithms to address the inherent imbalance
commonly encountered in deferral tasks. By reformulating the
minimization of the deferral loss as a cost-sensitive multi-class
classification problem, we derived algorithms that are both
theoretically sound and effective for handling expert imbalance.  Our
empirical results demonstrate the benefits of these new deferral
algorithms in imbalanced settings.

\ignore{
Additionally, our analysis has led to the design of general
cost-sensitive multi-class classification methods. Even in balanced
scenarios, these methods have the potential to enhance standard
cost-sensitive learning and structured prediction tasks 
(see Appendix~\ref{app:csl}).
}

\newpage
\section*{Impact Statement}

This paper presents work whose goal is to advance the field of Machine
Learning. There are many potential societal consequences of our work, none
of which we feel must be specifically highlighted here.

\bibliography{defid,add}
\bibliographystyle{icml2026}

\newpage
\appendix
\onecolumn

\renewcommand{\contentsname}{Contents of Appendix}
\tableofcontents
\addtocontents{toc}{\protect\setcounter{tocdepth}{3}} 
\clearpage

\section{Related Work}
\label{app:related-work}

\textbf{Learning to defer.} The \emph{single-stage learning to defer} paradigm has been
extensively studied, beginning with foundational research on learning
with abstention by \citet{CortesDeSalvoMohri2016,
  CortesDeSalvoMohri2016bis, CortesDeSalvoMohri2024}, and followed by extensive work on abstention and deferral \citep{madras2018learning,
  raghu2019algorithmic, mozannar2020consistent, wilder2021learning,
  pradier2021preferential, keswani2021towards, raman2021improving,
  liu2022incorporating, verma2022calibrated, charusaie2022sample,
  caogeneralizing, verma2023learning, MaoMohriZhong2024deferral,
  MaoMohriZhong2024predictor,MaoMohriZhong2024score,maorealizable,
  pmlr-v206-mozannar23a}. In this single-stage approach, a predictor
and a deferral function are learned jointly, with the deferral
function determining the best expert for each input. Specifically, in the abstention setting, where the cost function is constant, \citet{caogeneralizing, MaoMohriZhong2024score, MaoMohriZhong2024predictor} proposed surrogate losses that are Bayes-consistent \citep{Zhang2003,bartlett2006convexity, steinwart2007compare}. More generally, in the deferral setting, where the cost function depends on both the instance and the label, \citet{mozannar2020consistent, charusaie2022sample, verma2022calibrated, pmlr-v206-mozannar23a, maorealizable} proposed Bayes-consistent surrogate losses. Furthermore, \citet{verma2023learning, MaoMohriZhong2024deferral} extended the surrogate losses in \citep{verma2022calibrated, mozannar2020consistent} to the multi-expert deferral setting. In the specific case of abstention, the literature on selective classification provides methods for optimizing the generalization error of non-abstained samples under a fixed selection rate (see, for example, \citep{el2010foundations,wiener2011agnostic,el2012active,wiener2012pointwise,wiener2015agnostic,geifman2017selective,geifman2019selectivenet}).  However, these methods are not directly applicable to the deferral problem due to the presence of label-dependent costs and multiple experts. Extensions to the regression setting have also been studied in \citep{wiener2012pointwise,geifman2019selectivenet,jiang2020risk,zaoui2020regression,de2020regression,shah2022selective,li2023no,cheng2023regression} for abstention and in \citep{mao2024regression} for multi-expert deferral.  Additional research has expanded this framework to include deferring to populations \citep{tailor2024learning}, adversarial robustness \citep{montreuil2026adversarial}, top-$k$ learning \citep{montreuil2026why}, tailored query mechanisms \citep{montreuil2026optimal}, and other extensions \citep{montreuil2026beyond,montreuil2026online,montreuil2026learning,montreuil2026time}.

However, in many practical scenarios, strong predictors, such as a family of LLMs, are already available, and retraining them
alongside a deferral function can be computationally prohibitive.
Thus, the single-stage learning to defer framework and its associated
methods often overlook the practical constraints encountered in
real-world applications.
To address these limitations, \citet{mao2023two} introduced and
studied the \emph{two-stage learning to defer} framework, where the family of predictors is fixed and only the deferral function is learned. They provided non-asymptotic learning guarantees and effective algorithms, demonstrating strong empirical performance.
Specifically, they developed a novel family of surrogate loss
functions and algorithms with broad potential application, especially
in LLMs and other practical settings.  They proved that these
surrogate losses satisfy $\sH$-consistency bounds
\citep{awasthi2022h,awasthi2022multi,mao2023cross,mao2023structured,MaoMohriZhong2023ranking,MaoMohriZhong2023characterization,MaoMohriZhong2023rankingabs,mao2024h,mao2024multi,mao2025enhanced,MaoMohriZhong2025mastering,MaoMohriZhong2025principled,mao2025theory,zhong2025fundamental,desalvo2025budgeted,CortesMohriZhong2026mod,MohriZhong2026rllm,MohriZhong2026slin,mohri2025beyond,mohri2026principled,mohri2026generalized}, which are non-asymptotic,
hypothesis-set-specific upper bounds on the target estimation loss
expressed in terms of the surrogate estimation loss.  These bounds
provide stronger and more informative guarantees than
Bayes-consistency
\citep{Zhang2003,bartlett2006convexity,steinwart2007compare,mozannar2020consistent}, which only guarantees that minimizing the
surrogate loss over all measurable functions asymptotically minimizes
the target loss.  This approach differs from post-hoc methods \citep{okati2021differentiable,narasimhanpost} in that it can be used with existing predictors trained in the standard classification setting.

\textbf{Learning from imbalanced data.} A common strategy for addressing data imbalance involves oversampling
underrepresented classes or undersampling dominant ones
\citep{chawla2002smote,WallaceSmallBrodleyTrikalinos2011,
  KubatMatwin1997,QiaoLiu2008,han2005borderline,
  estabrooks2004multiple,
  liu2008exploratory,zhang2021learning}. Another related approach assigns
different loss penalties to different classes \citep{Iranmehr:2019,Masnadi-Shirazi:2010,elkan2001foundations,zhou2005training, zhao2018adaptive,zhang2018online, zhang2019online,sun2007cost,Fan:2017,cui2019class,jamal2020rethinking,GabidollaZharmagambetovCarreiraPerpinan2024}. However, these methods
lack strong theoretical justification, as they modify the training
distribution in ways that diverge from the true target
distribution. Empirically, their effectiveness is inconsistent and
often depends on extensive hyperparameter tuning \citep{VanHulse:2007}. In the deferral
setting, such techniques are even more problematic since they would
require assigning additional costs to experts, while the deferral
problem already incorporates instance-specific expert costs.

Beyond these data modification and cost-sensitive learning approaches, there have been a variety of techniques in the imbalanced multi-class classification setting, including logistic loss modifications \citep{lin2017focal,cao2019learning,tan2020equalization,jiawei2020balanced,hong2020disentangling,tian2020posterior,menon2020long,khan2019striking,menon2020long,ye2020identifying,kini2021label,zhu2023generalized,weilearning,LiXuBaoYangCongCaoHuang2024,cortes2025improved}, data augmentation \citep{wang2021rsg,zhugenerative,liuelta,gao2024enhancing}, representation learning \citep{liu2019large,cui2021parametric,gao2024distribution,meng2024learning,han2023wrapped}, decoupled training \citep{kang2019decoupling,zhong2021improving},  classifier
design \citep{tang2020long,yang2022inducing,kasarla2022maximum,shi2024long,dusimpro}, ensemble learning \citep{zhou2020bbn,xiang2020learning,wang2020long,cui2021reslt,zhang2021test,yangharnessing}, etc. We refer the reader to a recent survey by \citet{ZhangKangHooiYanFeng2023} for a more extensive list and details.

It remains an open question how these techniques can be extended and applied to the deferral problem with expert imbalance. A further challenge unique to deferral is that the learning distribution is defined over input-label pairs, whereas the imbalance we aim to correct concerns the distribution of experts, not labels. This distinction complicates the direct application of traditional imbalance-handling techniques. Furthermore, the traditional techniques mentioned above are often limited to empirical study and lack theoretical guarantees. Instead, this work designs a principled algorithm for deferral that effectively accounts for expert imbalance while preserving theoretical soundness and guarantees.

\section{Imbalance in Two-Stage Learning to Defer}
\label{app:imbalance}

\textbf{Notion of imbalance.} To define the notion of imbalance in the two-stage learning to defer setting, we refer readers to the definition of the deferral loss function $\ldef$
(Section~\ref{sec:deferral-loss}). In this context, $x$ denotes the input instance, and \emph{imbalance} refers to a situation where, for a large majority of instances, the same expert consistently incurs the lowest cost—that is, is considered the most suitable expert according to the cost function used in minimizing the deferral loss. This results in a highly skewed deferral pattern that over-relies on a small subset of experts, potentially not effectively using others who may be better suited for specific parts of the input space.

The term \emph{distribution of experts} refers to the empirical distribution over expert selections induced by the deferral policy across the input space. For example, in an LLM-based deferral system, imbalance may occur when one pretrained LLM performs significantly better than others across most inputs. If the deferral mechanism does not account for this, it may default to that model universally, ignoring specialized models that perform better on certain subpopulations.

Our experimental setup is designed to reflect precisely this kind of imbalance scenario—where a small subset of experts dominates unless appropriately regulated.

\textbf{Failure of standard deferral algorithms.} The observation that models trained on imbalanced datasets tend to underperform on minority categories is well-established in the imbalanced learning literature \citep{ZhangKangHooiYanFeng2023}. This phenomenon is often associated with \emph{long-tailed} distributions, where a few dominant classes receive the majority of the training data, leading classifiers to perform poorly on underrepresented classes—sometimes only marginally outperforming na\"ive baselines that always predict the majority class.

In our context, this translates to a small subset of experts being favored across most of the input space, resulting in a similarly long-tailed distribution over expert selections. Consequently, deferral algorithms trained under such imbalance tend to overfit to the dominant experts and not effectively use others—mirroring patterns observed in general imbalanced learning.

While this issue has not been extensively studied in the specific context of two-stage learning to defer, our work builds on this well-known phenomenon to motivate the need for more balanced expert use. In particular, we present severe expert imbalance scenarios in Appendix~\ref{app:exp-severe}, where baseline methods fail, and show that our proposed algorithm, \MILD, performs effectively to illustrate this point.

\section{Additional Experiments and Details}
\label{app:experiments}

\subsection{Image Classification Setup Details}
\label{app:exp-image}

To systematically evaluate robustness to expert imbalance, we constructed three expert setups with varying degrees of coverage and overlap. These setups simulate scenarios where experts have specialized sub-domains of competence. In particular, experts are designed to exhibit varying performance across different labels, a specific form of imbalance. We adopt this setup because label-based separation provides a clear and interpretable way to model expert specialization, and it can also serve as a proxy for more general forms of input partitioning. Our methods remain effective in scenarios where expert performance varies according to other characteristics of the input space (e.g., image features rather than labels).

\textbf{Setup I: ``7 vs.\ 2 vs.\ 1''.} Three synthetic experts were
      available. Expert 1 was always correct for the first 70\% of
      classes (e.g., classes 0–6 for CIFAR-10) and predicted uniformly
      at random for the remaining classes. Expert 2 was always correct
      for the next 20\% of classes (e.g., classes 7–8 for CIFAR-10)
      and predicted randomly for the remaining classes. Expert 3 was
      always correct for the last 10\% of classes (e.g., class 9 for
      CIFAR-10) and predicted randomly for all other classes.
      
\textbf{Setup II: ``5 vs.\ 2 vs.\ 2 vs.\ 1''.} 
Four experts were
      available. Expert 1 was always correct for the first 50\% of
      classes (e.g., classes 0–4 for CIFAR-10) and predicted uniformly
      at random for the remaining classes. Expert 2 was always correct
      for the next 20\% of classes (e.g., classes 5–6 for CIFAR-10)
      and generated random predictions for the remaining
      classes. Expert 3 was always correct for the next 20\% of
      classes (e.g., classes 7–8 for CIFAR-10) and generated random
      predictions for the remaining classes. Expert 4 was always
      correct for the last 10\% of classes (e.g., class 9 for
      CIFAR-10) and predicted randomly for all other classes.
      
\textbf{Setup III: ``4 vs.\ 2 vs.\ 2 vs.\ 1 vs.\ 1''.}   Expert 1 was always correct for the first 40\%
      of classes and predicted randomly for the remaining
      classes. Expert 2 was always correct for the next 20\% of
      classes and predicted randomly for the remaining classes. Expert
      3 was always correct for the next 20\% of classes and predicted
      randomly for the remaining classes. Expert 4 was always correct
      for the next 10\% of classes and predicted randomly for all
      other classes. Expert 5 was always correct for the last 10\% of
      classes and predicted randomly for all other classes.

We carried out experiments with two types of experts: synthetic and
real. The synthetic experts were generated precisely as described in the
three steps. Real experts, on the other hand, were trained on the
training data from their corresponding classes plus a 1\% fraction of
training data from the other classes. For example, in the ``7 vs.\ 2
vs.\ 1" setup on CIFAR-10, Expert 1 was trained using all training
data from classes 0–6 and 1\% of training data from classes 7–9.

We considered two types of cost functions. For the first type, we
chose the misclassification errors of the experts as the cost
functions: $c_j(x, y) = 1_{\expertexpert_{j}(x) \neq y}$. For the
second type, we chose the misclassification errors of the experts plus
the percentage of the domain where the expert is accurate as the cost
function. For example, in setup I (``7 vs.\ 2 vs.\ 1"), the cost
functions are chosen as $c_1(x, y) = 1_{\expertexpert_1(x) \neq y} +
0.7$, $c_2(x, y) = 1_{\expertexpert_2(x) \neq y} + 0.2$ and $c_3(x, y)
= 1_{\expertexpert_3(x) \neq y} + 0.1$ for Expert 1, Expert 2, and
Expert 3, respectively.

\subsection{LLM Routing on MMLU: Extended Details}
We provide further details on the MMLU experimental setup used in Section~\ref{sec:experiments}.
\label{app:exp-LLM}

\textbf{Models.} We adopted the \texttt{Qwen 2.5-Instruct} family of models \citep{qwen2.5} as experts due to their state-of-the-art performance in the open-weights category and their consistent architecture across sizes.
\begin{itemize}
    \item \textbf{Expert 1 (Strong):} \texttt{Qwen2.5-7B-Instruct}. Used as the generalist anchor.
    \item \textbf{Expert 2 (Medium):} \texttt{Qwen2.5-1.5B-Instruct}. Represents a balanced edge-device model.
    \item \textbf{Expert 3 (Tiny):} \texttt{Qwen2.5-0.5B-Instruct}. Represents an extremely lightweight speculative model.
\end{itemize}

\textbf{Dataset.} We constructed the routing dataset using the \textbf{MMLU} (Massive Multitask Language Understanding) benchmark \citep{hendrycks2020mmlu}. Specifically, we aggregated the `high\_school\_mathematics' and `high\_school\_world\_history' subsets to create a mix of reasoning-heavy and knowledge-heavy queries. The test set consisted of 2,000 samples.

\textbf{Cost Settings.}
We defined two distinct regimes to test the router's adaptability:
\begin{enumerate}
    \item \textbf{Setting (a) Error Only:} The cost is binary, $c_k(x, y) = 1_{\expertexpert_k(x) \neq y}$. This tests the router's ability to maximize accuracy without regard for compute. Since the 7B model is generally superior, this setting creates a massive imbalance where one expert is optimal $>80\%$ of the time.
    \item \textbf{Setting (b) Error + Cost:} The cost includes a normalized inference penalty, $c_k(x, y) = 1_{\expertexpert_k(x) \neq y} + \beta_k$. We set $\beta = [1.0, 0.6, 0.1]$. This creates a complex trade-off: the 0.5B model is ``optimal'' (lowest loss) for any query it gets right, while the 7B model is only optimal for queries where both smaller models fail.
\end{enumerate}

\subsection{Severe Expert Imbalance Analysis}
\label{app:exp-severe}

To further stress-test the algorithms, we examine a scenario with severe expert imbalance,
where the baseline method \TDEF\ fails, while our proposed algorithm,
\MILD, performs effectively.

\begin{table}[t]
\vskip 0.3in
    \caption{Comparison of our \MILD\ algorithm with
    \TDEF\ on CIFAR-10, CIFAR-100, SVHN under severe expert imbalance.}
    \centering
    \begin{tabular}{@{\hspace{0cm}}lllcc@{\hspace{0cm}}}
    \toprule
    \multirow{3}{*}{Method} & \multirow{3}{*}{Dataset} & Deferral Loss
    & \multicolumn{2}{c}{Ratio of Expert Deferral (\%)} \\
    \cmidrule{4-5}
    & & DL $\sim$ error rate & 1 & 2\\
    \midrule
    \TDEF & \multirow{2}{*}{CIFAR-10} & 0.0810 $\pm$ 0.0025 & 97.94 & 2.06\\
    \MILD & & \textbf{0.0125 $\pm$ 0.0016} & 90.52 & 9.48\\
    \cmidrule{1-5}
    \TDEF &  \multirow{2}{*}{CIFAR-100} & 0.0881 $\pm$ 0.0034 & 98.41 & 1.59\\
    \MILD & & \textbf{0.0192 $\pm$ 0.0037} & 91.56 & 8.44\\
    \cmidrule{1-5}
    \TDEF & \multirow{2}{*}{SVHN} & 0.0782 $\pm$ 0.0016 & 97.55 & 2.45\\
    \MILD & & \textbf{0.0086 $\pm$ 0.0012} & 90.05 & 9.95\\
    \bottomrule
    \end{tabular}
    \vskip -0.3in
    \label{tab:comparison-severe}
\end{table}

\textbf{Severe expert imbalance scenario:} For this scenario, we consider two experts, defining their cost functions based on misclassification errors:
$c_j(x, y) = 1_{\expertexpert_{j}(x) \neq y}$. This expert configuration, which reflects the real experts setting discussed in Section~\ref{sec:experiments}, involved training Expert 1 with all training data from 90\% of classes and 1\% from the remaining classes. Expert 2 was similarly trained using all data from 10\% of classes and 1\% from the remaining classes.

Table~\ref{tab:comparison-severe} shows that a naive application of the \TDEF\ algorithm in this scenario predominantly selects Expert 1, thereby failing to leverage Expert 2 despite its relevance to a nontrivial portion of the data. In contrast, \MILD\ consistently achieves significantly smaller deferral losses by selecting the experts much closer to their optimal allocation.

\subsection{Choice of \texorpdfstring{$\brho$}{rho}} 
\label{app:exp-rho}

As discussed in Section~\ref{sec:algorithms} (B. General cost-sensitive algorithm), while our general algorithm allows $\rho_j$ to be tuned freely over a range of values, the search is in fact guided by the theoretically optimal values derived from our analysis. Specifically, the theoretical guidance suggests choosing $\rho_j$ close to $\frac{(m_j \sfX_{j}^2)^{\frac{1}{3}}}{\ov \sfX} \ov \rho$. In the experiments, we use this expression to initialize the search range, and perform validation-based tuning in a small neighborhood around these theoretically motivated values, using a step size of $1$ over the interval $\bracket*{\frac{(m_j \sfX_{j}^2)^{\frac{1}{3}}}{\ov
  \sfX} - 5, \frac{(m_j \sfX_{j}^2)^{\frac{1}{3}}}{\ov
  \sfX} +5}$. Empirically, we observe that performance is quite robust within this neighborhood, which suggests that our theoretical estimation serves as a reliable prior for guiding the selection of $\rho_j$.

\section{Discussion on Alternative Baselines}
\label{app:confidence}

We briefly address the applicability of confidence-based methods to the two-stage multiple-expert deferral setting. (For a discussion on standard cost-sensitive methods, see Section~\ref{sec:further_analysis}.)

\textbf{Confidence-based methods.} Standard baselines that rely on thresholding prediction confidence are primarily designed for the single-expert (or abstention) setting, where the decision is binary and costs are typically constant. Extending this paradigm to the multi-expert setting is non-trivial, as it is unclear how to define optimal thresholds when experts possess heterogeneous costs and predictive capabilities. To the best of our knowledge, the algorithm proposed by \citet{mao2023two} is the only existing baseline specifically formulated for two-stage deferral with multiple experts.

While one might consider a \emph{cascading} approach, where experts are queried sequentially based on confidence, this strategy is inherently sensitive to the ordering of experts and implicitly assumes a fixed ranking of predictive strength. Furthermore, it ignores the \emph{instance-dependent} inference costs associated with each expert. In contrast, our method operates in a \emph{routing} framework, selecting the most suitable expert from a parallel ensemble based on both accuracy and cost, regardless of expert ordering. Moreover, even in the simpler abstention setting, confidence-based methods have been shown to be suboptimal \citep{CortesDeSalvoMohri2016} and are outperformed by learning-to-defer approaches \citep{mao2023two}. Crucially, neither approach accounts for expert imbalance, the central challenge addressed by our work.

\ignore{
\section{Novelty Compared to \texorpdfstring{\citep{cortes2025balancing}}{Novelty}}
\label{app:novelty}

Here, we clarify the key novel aspects of our work, particularly in comparison to \citep{cortes2025balancing}.

 \begin{enumerate}
     \item \textbf{Constructive Reformulation (Section~\ref{sec:deferral-loss})}: Our first contribution lies in a constructive reformulation of the deferral loss minimization problem as a cost-sensitive learning problem over the joint input-expert space. This formulation is essential for addressing expert imbalance in the deferral setting and sets the stage for subsequent theoretical developments.

     \item \textbf{Theoretical Advances in Imbalanced Cost-Sensitive Learning (Section~\ref{sec:imbalanced-csl})}: We introduce a cost-sensitive margin loss function specifically designed for imbalanced multi-class settings with instance-dependent costs. This formulation is novel, and the accompanying theoretical analysis, such as margin bounds and algorithm derivation, requires substantial adaptation beyond existing techniques to accommodate the cost-sensitive and instance-dependent nature of the problem. These contributions extend the work of \citet{cortes2025balancing}, which is limited to the standard class-imbalanced setting with uniform costs. In particular, we establish novel margin bounds based on a refined upper bound involving a maximum operator and derive new Rademacher complexity bounds for this term using the vector contraction lemma. These results significantly generalize existing theory, which has largely focused on uniform or class-dependent costs.
    
     \item \textbf{Derivation of Surrogate Losses and New Consistency Guarantee (Sections~\ref{sec:deferral-algorithms} and \ref{sec:deferral-bounds})}: Our theoretical analysis features a detailed and non-trivial derivation of surrogate losses. Specifically, for the case of deferral where costs depend only on $x$, we establish a new hypothesis-set-dependent consistency guarantee (Theorem~\ref{thm:H-consistency-surrogate}) for the corresponding surrogate loss. To the best of our knowledge, this result is novel, even for the standard imbalanced case with constant cost functions (e.g., $C(x) \equiv 1$), and was not established in \citep{cortes2025balancing}. Consequently, Theorem~\ref{thm:H-consistency-surrogate} also provides the first $\sH$-consistency guarantees for the existing IMMAX surrogate losses \citep{cortes2025balancing} used in standard imbalanced learning.
    
     \item \textbf{Application to Deferral Setting (Section~\ref{sec:deferral-applicaton})}: Finally, we apply our theoretical framework to develop a new algorithm for deferral under expert imbalance. This algorithm is grounded in our cost-sensitive formulation and benefits from the aforementioned theoretical guarantees.

 \end{enumerate}

}
\section{Reformulation of the Deferral Loss: Proofs of Lemma~\ref{lemma:deferral-loss-1} and Lemma~\ref{lemma:deferral-loss-2}}
\label{app:deferral-loss}

\DeferralLossOne*

\begin{proof}
For any $f \in \sF$ and $(x, y) \in \sX \times \sY$, we have
\begin{equation}
\begin{aligned}
\sfL_{\rm def}(f, x, y)
& =
\sum_{k = 1}^p c_k(x, y) 1_{\ff(x) = k}\\
& = \sum_{k = 1}^p c_k(x, y) \paren*{\sum_{k' = 1}^{p} 1_{\ff(x) \neq k'}1_{k' \neq k} - (p - 2)}\\
& = \sum_{k = 1}^p \sum_{k' = 1}^{p} c_k(x, y) 1_{\ff(x) \neq k'}1_{k' \neq k}
- (p - 2) \sum_{k = 1}^p c_k(x, y)\\
& = \sum_{k = 1}^{p} \paren*{\sum_{k' = 1}^p c_{k'}(x, y)1_{k' \neq k}} 1_{\ff(x) \neq k}
- (p - 2) \sum_{k = 1}^p c_k(x, y)\\
& = \sum_{k = 1}^{p} \paren*{\sum_{k' = 1}^p c_{k'}(x, y)1_{k' \neq k}} 1_{\rho_f(x, k) \leq 0}
- (p - 2) \sum_{k = 1}^p c_k(x, y),
\end{aligned}
\end{equation}
which completes the proof.
\end{proof}

\DeferralLossTwo*
\begin{proof}
For any $f \in \sF$ and $(x, y) \in \sX \times \sY$, we have
\begin{equation}
\begin{aligned}
\sfL_{\rm def}(f, x, y)
& =
\sum_{k = 1}^p c_k(x, y) 1_{\ff(x) = k}\\
& = \sum_{k = 1}^p c_k(x, y) \paren*{ 1 - 1_{\ff(x) \neq k}}\\
& = \sum_{k = 1}^p c_k(x, y) - \sum_{k = 1}^p \paren*{c_k(x, y) - 1} 1_{\ff(x) \neq k}
- \sum_{k = 1}^p 1_{\ff(x) \neq k}\\
& = \sum_{k = 1}^p \paren*{1 - c_k(x, y)} 1_{\ff(x) \neq k} + \sum_{k = 1}^p c_k(x, y) - (p - 1)\\
& = \sum_{k = 1}^p \paren*{1 - c_k(x, y)} 1_{\rho_f(x, k) \leq 0} + \sum_{k = 1}^p c_k(x, y) - (p - 1),
\end{aligned}
\end{equation}
which completes the proof.
\end{proof}

\newpage
\section{Margin Bound: Proof of Theorem~\ref{thm:margin-bound}}
\label{app:margin-bound}

\MarginBound*
\begin{proof}
Consider the family of functions taking values in $[0, 1]$:
\begin{equation*}
\sF' = \curl*{z = (x, k) \mapsto \sfL_{\brho}(f, x, k) \colon f \in \sF}.
\end{equation*}
By \citep[Theorem~3.3]{MohriRostamizadehTalwalkar2018}, with probability at least $1 - \delta$, for all $g \in \sF'$,
\begin{equation*}
\E[g(z)] \leq \frac{1}{m} \sum_{i = 1}^m g(z_i) + 2 \h \Rad_{S}(\sF') + 3 \sqrt{\frac{\log \frac{2}{\delta}}{2m}},
\end{equation*}
and thus, for all $f \in \sF$,
\begin{equation*}
\E[\sfL_{\brho}(f, x, k)] \leq \h \sE_{S, \brho}(f) + 2 \h \Rad_{S}(\sF') + 3 \sqrt{\frac{\log \frac{2}{\delta}}{2m}}.
\end{equation*}
Since $\sE_{\sfL}(f) \leq \sE_{\sfL_{\brho}}(f) = \E[\sfL_{\brho}(f, x, k)]$, we have
\begin{equation*}
\sE_{\sfL}(f) \leq \h \sE_{S, \brho}(f) + 2 \h \Rad_{S}(\sF') + 3 \sqrt{\frac{\log \frac{2}{\delta}}{2m}}.
\end{equation*}
Fix $f$, $(x_i, k_i)$ and $\rho > 0$, define $\Psi$ as follows:
\[\Psi([f(x_i, k)]_{k \in [\num]}) = \max_{k' \in [\num]} \curl*{c(x_i, k_i, k') \Phi_{\rho}\paren*{f(x_i, k_i) - f(x_i, k')}}.\] Then,
by the sub-additivity of the maximum operator, we can write
for any $f, \wt f \in \sF$:
\begin{align*}
  & \Psi([f(x_i, k)]_{k \in [\num]}) - \Psi([\wt f(x_i, k)]_{k \in [\num]})\\
  & \leq \max_{k' \in [\num]} \curl*{c(x_i, k_i, k') \Phi_{\rho}\paren*{f(x_i, k_i) - f(x_i, k')} - c(x_i, k_i, k') \Phi_{\rho}\paren*{\wt f(x_i, k_i) - \wt f(x_i, k')}}\\
  &  \leq \max_{k' \in [\num]} \curl*{\frac{2 c(x_i, k_i, k')}{\rho} \norm*{[f(x_i, k) - \wt f(x_i, k)]_{k \in [\num]}}_1}
  \tag{by $\frac{1}{\rho}$-Lipschitzness of $\Phi_{\rho}$}\\
  &  \leq \frac{2 \sqrt{p}}{\rho} \norm*{[f(x_i, k) - \wt f(x_i, k)]_{k \in [\num]}}_2.  
\end{align*}
Thus, $\Psi$ is $\frac{2 \sqrt{p}}{\rho}$-Lipschitz with respect to the $\norm*{\cdot}_{2}$ norm. Thus, by the vector contraction lemma \citep{Maurer2016,cortes2016structured}, $\h \Rad_S(\sF')$ can be bounded as follows:
\begin{align*}
\h \Rad_S(\sF')
\leq \frac{2 \sqrt{2 p}}{m}
\E_{\bepsilon}\bracket*{\sup_{f \in \sF} \curl*{\sum_{j = 1}^p \sum_{i \in S_j} \sum_{k = 1}^p  \e_{ik} \frac{f(x_i, k)}{\rho_j}}}
= 2 \sqrt{2 p} \, \h \Rad_{S, \brho}(\sF).
\end{align*}
This proves the second inequality. The first inequality, can be derived in the same way by using the first inequality of \citep[Theorem~3.3]{MohriRostamizadehTalwalkar2018}.
\end{proof}

\ignore{
\section{Theorem~\ref{thm:rad-kernel-1} and Theorem~\ref{thm:rad-kernel-2}}
\label{app:rad-kernel}

\begin{theorem}
\label{thm:rad-kernel-1}
Consider $\sF_1 = \curl*{(x, k) \mapsto w \cdot \Phi(x, k) \colon w \in \Rset^d, \norm*{w}_1 \leq \sfF_1}$. For any $j \in [\num]$, let $d_{j, \infty} = \sup_{i \in S_{j}, k \in [\num]} \norm*{\Phi(x_i, k)}_{\infty}$. Then, the following bound holds for all $f \in \sF$:
\begin{equation*}
\h \Rad_{S, \brho}(\sF_1) \leq \frac{\sfF_1 \sqrt{2p}}{m} \sqrt{\sum_{j = 1}^p \frac{m_j d_{j, \infty}^2}{\rho_j^2} \log(2d)}.
\end{equation*}
\end{theorem}
\begin{proof}
The proof follows through a series of inequalities:
\begin{align*}
& \h \Rad_{S, \brho}(\sF_1)\\
& = \frac{1}{m} \E_{\bepsilon}\bracket*{\sup_{\norm*{w}_1 \leq \sfF_1}  w \cdot \paren*{ \sum_{j = 1}^p \sum_{i \in S_{j}} \sum_{k = 1}^p \e_{ik} \frac{\Phi(x_i, k)}{\rho_{j}} }}\\
& \leq \frac{\sfF_1}m \E_{\bepsilon}\bracket*{\norm*{ \sum_{j = 1}^p \sum_{i \in S_{j}} \sum_{k = 1}^p \e_{ik} \frac{\Phi(x_i, k)}{\rho_{j}}}_{\infty}} = \frac{\sfF_1}m \E_{\bepsilon}\bracket*{\max_{j' \in [d], s \in \curl*{-1, +1}}  s \sum_{j = 1}^p \sum_{i \in S_{j}} \sum_{k = 1}^p \e_{ik} \frac{\Phi_{j'}(x_i, k)}{\rho_{j}}}\\ 
& \leq \frac{\sfF_1}m \bracket*{2p \paren*{\sum_{j = 1}^p \frac{m_j d_{j, \infty}^2}{\rho_j^2}} \log(2d)}^{\frac12} = \frac{\sfF_1 \sqrt{2p}}{m} \sqrt{\sum_{j = 1}^p \frac{m_j d_{j, \infty}^2}{\rho_j^2} \log(2d)}.
\end{align*}
The first inequality makes use of Hölder's inequality and the bound on $\norm*{w}_1$, and the second one follows from the maximal inequality and the fact that a Rademacher variable is 1-sub-Gaussian, and $\sup_{i \in S_{j}, k \in [\num]} \norm*{\Phi(x_i, k)}_{\infty} = d_{j, \infty}$.
\end{proof}

\begin{theorem}
\label{thm:rad-kernel-2}
Consider $\sF_2 = \curl*{(x, k) \mapsto w \cdot \Phi(x, k) \colon w \in \Rset^d, \norm*{w}_2 \leq \sfF_2}$. For any $j \in [\num]$, let $d_{j, 2} = \sup_{i \in S_{j}, k \in [\num]} \norm*{\Phi(x_i, k)}_2$. Then, the following bound holds for all $f \in \sF$:
\begin{equation*}
\h \Rad_{S, \brho}(\sF_2) \leq \frac{\sfF_2 \sqrt{p}}{m} \sqrt{\sum_{j = 1}^p \frac{m_j d_{j, 2}^2}{\rho_j^2}}.
\end{equation*}
\end{theorem}
\begin{proof}
The proof follows through a series of inequalities:
\begin{align*}
& \h \Rad_{S, \brho}(\sF)(\sF_2)\\
& = \frac{1}{m} \E_{\bepsilon}\bracket*{\sup_{\norm*{w}_2 \leq \sfF_2}  w \cdot \paren*{ \sum_{j = 1}^p \sum_{i \in S_{j}} \sum_{k = 1}^p \e_{ik} \frac{\Phi(x_i, k)}{\rho_{j}} }}\\
& \leq \frac{\sfF_2}m \E_{\bepsilon}\bracket*{\norm*{ \sum_{j = 1}^p \sum_{i \in S_{j}} \sum_{k = 1}^p \e_{ik} \frac{\Phi(x_i, k)}{\rho_{j}}}_2} \leq \frac{\sfF_2}m \bracket*{\E_{\bepsilon}\bracket*{\norm*{ \sum_{j = 1}^p \sum_{i \in S_{j}} \sum_{k = 1}^p \e_{ik} \frac{\Phi(x_i, k)}{\rho_{j}}}_2^2}}^{\frac12}\\ 
& \leq \frac{\sfF_2}m \bracket*{\sum_{j = 1}^p \frac{1}{ \rho_{j}^2 } \sum_{i \in S_{j}} \sum_{k = 1}^p \norm*{\Phi(x_i, k)}^2_2}^{\frac12} \leq \frac{\sfF_2}m \sqrt{p \sum_{j = 1}^p \frac{m_j d_{j, 2}^2}{\rho_j^2}}  = \frac{\sfF_2 \sqrt{p}}{m} \sqrt{\sum_{j = 1}^p \frac{m_j d_{j, 2}^2}{\rho_j^2}}.
\end{align*}
The first inequality makes use of the Cauchy-Schwarz inequality and the bound on $\norm*{w}_2$, the second follows by Jensen's inequality, the third by $\E[\e_{ik} \e_{jk'}] = \E[\e_{ik}] \E[\e_{jk'}] = 0$ for $i \neq j$ and $k \neq k'$, and the fourth one by $ \sup_{i \in S_{j}, k \in [\num]} \norm*{\Phi(x_i, k)}_2 = d_{j, 2}$.
\end{proof}
}

\section{Derivation of Surrogate Losses}
\label{app:derivation}

Using the fact that $c(x, k, k') = 0$ for $k = k'$, we can rewrite $\sfL_{\brho}$ as
\[
\sfL_{\brho}(f, x, k) = \max_{k' \neq k} \curl*{c(x, k, k') \Phi_{\rho_k}(f(x, k) - f(x, k'))}.
\]
For $\Phi_\rho(t) \leq \Psi(t/\rho)$, we have:
\begin{align*}
  \sfL_{\brho}(f, x, k)
  & \leq \max_{k' \neq k} \curl*{c(x, k, k') \Psi\paren*{\frac{f(x, k) - f(x, k')}{\rho_k}}} .
\end{align*}
In particular, choosing $\Psi$ to be the logistic loss yields:
\begin{align*}
  \sfL_{\brho}(f, x, k)
  & \leq \max_{k' \neq k} \curl*{c(x, k, k') \log\bracket*{1 + \exp\paren*{\frac{f(x, k') - f(x, k)}{\rho_k}}}}.
\end{align*}
When $c(x, k, k')$ is independent of $(k, k')$, we can write:
\begin{align*}
  \sfL_{\brho}(f, x, k)
  & \leq \max_{k' \neq k} \curl*{C(x) \log\bracket*{1 + \exp\paren*{\frac{f(x, k') - f(x, k)}{\rho_k}}}}\\
  & = C(x) \max_{k' \neq k} \curl*{\log\bracket*{1 + \exp\paren*{\frac{f(x, k') - f(x, k)}{\rho_k}}}}\\
  & = C(x) \log\bracket*{1 + \max_{k' \neq k} \exp\paren*{\frac{f(x, k') - f(x, k)}{\rho_k}}}\\
  & \leq C(x) \log\bracket*{1 + \sum_{k' \neq k} \exp\paren*{\frac{f(x, k') - f(x, k)}{\rho_k}}}\\  
  & = C(x) \log\bracket*{\sum_{k' = 1}^p \exp\paren*{\frac{f(x, k') - f(x, k)}{\rho_k}}}.
\end{align*}
Otherwise, 
\begin{align*}
  \sfL_{\brho}(f, x, k)
  & \leq \max_{k' \neq k} \curl*{c(x, k, k') \log\bracket*{1 + \exp\paren*{\frac{f(x, k') - f(x, k)}{\rho_k}}}}\\
  & = \log \bracket*{\max_{k' \neq k} \bracket*{1 + \exp\paren*{\frac{f(x, k') - f(x, k)}{\rho_k}}}^{c(x, k, k')}}\\
  & \leq \log \bracket*{\sum_{k' \neq k} \bracket*{1 + \exp\paren*{\frac{f(x, k') - f(x, k)}{\rho_k}}}^{c(x, k, k')}}.
\end{align*}
Another choice is to set $\Psi$ as the comp-sum loss with $\tau \neq 1$, which yields:
\begin{align*}
  \sfL_{\brho}(f, x, k)
  & \leq \max_{k' \neq k} \curl*{c(x, k, k') \Phi^{\tau}\bracket*{\exp\paren*{\frac{f(x, k') - f(x, k)}{\rho_k}}}}\\
  & = \frac{1}{1 - \tau} \max_{k' \neq k} \curl*{c(x, k, k') \paren*{\bracket*{1 + \exp\paren*{\frac{f(x, k') - f(x, k)}{\rho_k}}}^{1 - \tau} - 1}}\\
  & \leq \frac{1}{1 - \tau} \sum_{k' \neq k} \curl*{c(x, k, k') \paren*{\bracket*{1 + \exp\paren*{\frac{f(x, k') - f(x, k)}{\rho_k}}}^{1 - \tau} - 1}}.
\end{align*}
When $c(x, k, k')$ is independent of $(k, k')$, we can write:
\begin{align*}
  \sfL_{\brho}(f, x, k)
  & \leq \frac{C(x)}{1 - \tau} \sum_{k' \neq k} \curl*{ \paren*{\bracket*{1 + \exp\paren*{\frac{f(x, k') - f(x, k)}{\rho_k}}}^{1 - \tau} - 1}}.
\end{align*}
\newpage
\section{\texorpdfstring{$\sF$}{H}-Consistency Bound:
  Proof of Theorem~\ref{thm:H-consistency-surrogate}}
\label{app:H-consistency}

\HConsistencySurrogate*
\begin{proof}
The
conditional error and the best-in-class conditional error of the
cost-sensitive zero-one loss can be expressed as follows:
\begin{align*}
  \E_{k} \bracket*{\sfL(f, x, k) \mid x} &= C(x) \sum_{k \in [\num]} \sfp(k | x) 1_{\rho_f(x, y) \leq 0}
  = C(x) \paren*{1 - \sfp(\ff(x) | x)},\\
\inf_{f \in \sF}\E_{k} \bracket*{\sfL(f, x, k) \mid x} 
& = C(x) \paren*{1 - \max_{k \in [\num]} \sfp(k | x)}.
\end{align*}
Let $k_{\max} = \argmax_{k \in [\num]} \sfp(k | x) $. Then, the difference between the two terms is given by:
\begin{align*}
  \E_{k} \bracket*{\sfL(f, x, k) \mid x} - \inf_{f \in \sF}\E_{k} \bracket*{\sfL(f, x, k) \mid x}
  = \sfp(k_{\max} | x) - \sfp(\ff(x) | x).
\end{align*}
 For the imbalanced cost-sensitive surrogate, the conditional error can be expressed as follows:
\begin{align*}
& \E_{k} \bracket*{\wt \sfL_{\brho}(f, x, k)\mid x}\\
&=  \sum_{k \in [\num]} \sfp(k | x) \Phi_{\rho_y}(\rho_f(x, y)) \\
& =  C(x) \sum_{k \in [\num]} \sfp(k | x) \curl*{\log\bracket*{ \sum_{k' = 1}^p \exp\paren*{\frac{f(x, k') - f(x, k)}{\rho_k}}}}\\
& = C(x) \sfp(k_{\max} | x) \log \paren*{\sum_{k'\in [\num]} e^{\frac{f(x, k') - f(x, k_{\max})}{\rho_{k_{\max}}}}} +  C(x)\sfp(\ff(x) | x) \log \paren*{\sum_{k'\in [\num]} e^{\frac{f(x, k') - f(x, \ff(x))}{\rho_{\ff(x)}}} }\\
& \qquad + C(x) \sum_{k \notin \curl*{k_{\max}, \ff(x)}} \sfp(k | x) \log \paren*{\sum_{k' \in [\num]} e^{\frac{f(x, k') - f(x, k)}{\rho_k}}}.
\end{align*}
For any $f \in \sF$ and $x\in \sX$, by the completeness
of $\sF$, we can always find a family of hypotheses $\curl*{\ov
  f_{\mu} \colon \mu \in \mathbb{R}} \subset \sF$ such that
$\ov f_{\mu}(x, \cdot)$ take the following values:
\begin{align}
\label{eq:value-h-mu}
\ov f_{\mu}(x, k) = 
\begin{cases}
  f(x, k) & \text{if $k \not \in \curl*{k_{\max}, \ff(x)}$}\\
  \log\paren*{\exp\bracket*{f(x, k_{\max})} + \mu} & \text{if $k = \ff(x)$}\\
  \log\paren*{\exp\bracket*{f(x, \ff(x))} - \mu} & \text{if $k = k_{\max}$}.
\end{cases} 
\end{align}
Note that the hypotheses $\ov f_{\mu}$ has the following property:
\begin{align}
\label{eq:property-h-mu}
\sum_{k \in [\num]}e^{\frac{f(x, k)}{\rho_{k'}}} = \sum_{k \in [\num]}e^{\frac{\ov f_{\mu}(x, k)}{\rho_{k'}}},\, \forall \mu \in \mathbb{R} \text{ and } k' \in [\num].
\end{align}
Thus, the best-in-class conditional error can be upper bounded as follows:
\begin{equation*}
\inf_{f \in \sF } \E_{k} \bracket*{\wt \sfL_{\brho}(f, x, k) \mid x} \leq \inf_{\mu \in \Rset} \E_{k} \bracket*{\wt \sfL_{\brho}(\ov f_{\mu}, x, k) \mid x}.
\end{equation*}
The conditional regret can be lower bounded as follows:
\begin{align*}
& \E_{k} \bracket*{\wt \sfL_{\brho}(f, x, k) \mid x} - \inf_{f \in \sF}\E_{k} \bracket*{\wt \sfL_{\brho}(f, x, k) \mid x}\\
&\geq \E_{k} \bracket*{\wt \sfL_{\brho}(f, x, k) \mid x} - \inf_{\mu \in \Rset} \E_{k} \bracket*{\wt \sfL_{\brho}(\ov f_{\mu}, x, k) \mid x}\\
& = \sup_{\mu\in \Rset} \bigg\{\sfp(k_{\max} | x)\paren*{ \log \paren*{\frac{\sum_{k' \in [\num]} e^{\frac{f(x, k')}{\rho_{k_{\max}}}}}{e^{\frac{f(x, k_{\max})}{\rho_{k_{\max}}}}}} - \log\paren*{\frac{\sum_{k' \in [\num]} e^{\frac{f(x, k')}{\rho_{k_{\max}}}}}{e^{\frac{f(x, \ff(x))-\mu}{\rho_{k_{\max}}}}}}}\\
& \qquad +  \sfp(\ff(x) | x)\paren*{ \log \paren*{\frac{\sum_{k' \in [\num]} e^{\frac{f(x, k')}{\rho_{\ff(x)}}}}{e^{\frac{f(x, \ff(x))}{\rho_{\ff(x)}}}}} - \log \paren*{\frac{\sum_{k' \in [\num]} e^{\frac{f(x, k')}{\rho_{\ff(x)}}}}{e^{\frac{f(x, k_{\max}) + \mu}{\rho_{\ff(x)}}}}}} \bigg\}\\
& \geq \sfp(k_{\max} | x)\log\bracket*{\frac{\paren*{e^{f(x,k_{\max})} + e^{f(x,\ff(x))}}\sfp(k_{\max} | x)}{e^{f(x,k_{\max})}\paren*{\sfp(k_{\max} | x)+\sfp(\ff(x) | x)}}}\\
&\qquad + \sfp(\ff(x) | x)\log\bracket*{\frac{\paren*{e^{f(x,k_{\max})}  + e^{f(x,\ff(x))}}\sfp(\ff(x) | x)}{ e^{f(x,\ff(x))}\paren*{\sfp(k_{\max} | x) + \sfp(\ff(x) | x)}}} 
\tag{Maximum is achieved by $\mu^* = \frac{e^{f(x, \ff(x))}\sfp(\ff(x) | x)\rho_{\ff(x)} - e^{f(x,  k_{\max})}\sfp(k_{\max} | x)\rho_{k_{\max}}}{\sfp(k_{\max}| x)\rho_{k_{\max}} + \sfp(\ff(x) | x)\rho_{\ff(x)}}$}\\
& \geq \sfp(k_{\max} | x)\log\bracket*{\frac{2 \sfp(k_{\max} | x)}{\paren*{\sfp(k_{\max} | x)+\sfp(\ff(x) | x)}}} + \sfp(\ff(x) | x)\log\bracket*{\frac{2 \sfp(\ff(x) | x)}{ \paren*{\sfp(k_{\max} | x) + \sfp(\ff(x) | x)}}} 
\tag{minimum is attained when $f(x, \ff(x)) = f(x, k_{\max})$}\\
& \geq \bracket*{\sfp(k_{\max} | x)+\sfp(\ff(x) | x)} \times \frac12\bracket*{ \abs*{\frac{\sfp(k_{\max} | x)}{\sfp(k_{\max} | x)+\sfp(\ff(x) | x)}-\frac12} +\abs*{\frac{\sfp(\ff(x) | x)}{\sfp(k_{\max} | x)+\sfp(\ff(x) | x)}-\frac12}}^2
\tag{Pinsker’s inequality \citep[Proposition~E.7]{MohriRostamizadehTalwalkar2018}}\\
& = \bracket*{\sfp(k_{\max} | x)+\sfp(\ff(x) | x)} \times \frac12 \bracket*{\frac{\sfp(k_{\max} | x)-\sfp(\ff(x) | x)}{\sfp(k_{\max} | x)+\sfp(\ff(x) | x)}}^2
\tag{$\sfp(k_{\max} | x)\geq \sfp(\ff(x) | x)$}\\
& \geq \frac12 \paren*{\sfp(k_{\max} | x) - \sfp(\ff(x) | x)}^2
\tag{$\sfp(k_{\max} | x)+\sfp(\ff(x) | x)\leq 1$}.
\end{align*}
By taking the expectation of both sides and using  Jensen's inequality, we obtain:
\begin{equation*}
\sE_{\sfL}(f) - \sE^*_{\sfL}(\sF) + \sM_{\sfL}(\sF) \leq \sqrt{2} \paren*{\sE_{\wt \sfL_{\brho}}(f) - \sE^*_{\wt \sfL_{\brho}}(\sF) + \sM_{\wt \sfL_{\brho}}(\sF)}^{\frac12},
\end{equation*}
which completes the proof.
\end{proof}

\section{Cost-Sensitive Learning and Structured Prediction}
\label{app:csl}

While our primary focus is addressing class imbalance in learning to
defer, our theoretical analysis and algorithm offer independent
contributions to cost-sensitive learning and related areas such as
structured prediction. We briefly discuss that in this section.

We introduced a margin-based upper bound on the cost-sensitive loss function:
\begin{align*}
  \sfL(f, x, k)
  & \leq \max_{k' \in [\num]} \curl[\Big]{c(x, k, k') \Phi_{\rho}\paren[\big]{f(x, k) - f(x, k')}}.
\end{align*}
This bound extends prior work by incorporating instance-dependent cost
functions ($x$-dependent), making it both more general and tighter than the upper
bound used in \citep*{cortes2016structured} (see Lemma 4 and the
surrogate losses on page 8), the closest related study on margin-based
cost-sensitive and structured prediction bounds.

For instance, adopting the hinge loss for $\Psi$ as an auxiliary
margin-based loss to upper bound $\sfL_{\brho}$ yields a loss
function that serves as a lower bound for the hinge-loss-type surrogate loss
in that publication (which coincides with the StructSVM loss
function), even in the balanced case ($\rho_k = \rho$ for all $k$).
Similarly, using the logistic loss for $\Psi$ as an auxiliary
margin-based loss function to upper bound $\sfL_{\brho}$ results in a
loss function that is upper bounded by the logistic-loss-type
surrogate in \citep*{cortes2016structured} (an extension of
Conditional Random Field (CRF) loss function), even in the balanced
case.

For the same reasons, our margin-based theoretical analysis yields
more favorable learning bounds, since our margin loss $\sfL_{\brho}$
serves as a lower bound for the multiplicative margin loss considered
in \citep*{cortes2016structured}, even in the balanced case. Thus,
this leads to improved learning bounds for structured prediction
compared to that study, as well as structured prediction algorithms
with stronger theoretical guarantees.

We leave a more detailed study of the theoretical and algorithmic
implications of our analysis for cost-sensitive learning and
structured prediction to future work.

\end{document}